\newif\ifsubmission
\submissionfalse          
\ifsubmission
  \documentclass[letterpaper]{article} 
  \usepackage[submission]{aaai2027}    
  \usepackage[hyphens]{url}            
  \usepackage{graphicx}                
  \usepackage{natbib}                  
  \usepackage{caption}                 
  \usepackage{booktabs}
\else
  \documentclass[11pt]{article}
  \usepackage[margin=1in]{geometry}
  \usepackage{graphicx}
  \usepackage{booktabs}
  \usepackage{microtype}
  \usepackage[colorlinks=true,linkcolor=blue,citecolor=blue,urlcolor=blue]{hyperref}
  \hypersetup{
    pdftitle={The Value of a Prompt: An LLM-Relative Kolmogorov-Complexity Approach},
    pdfauthor={Rafael Pass}
  }
\fi

\usepackage{amsmath,amssymb,amsthm}
\usepackage{comment}

\ifsubmission
  \excludecomment{proof}                 
  \newcommand{\fullonly}[1]{}
  \newcommand{\subonly}[1]{#1}
\else
  \newcommand{\fullonly}[1]{#1}
  \newcommand{\subonly}[1]{}
\fi

\newtheorem{theorem}{Theorem}[section]
\newtheorem{lemma}[theorem]{Lemma}
\newtheorem{proposition}[theorem]{Proposition}
\newtheorem{corollary}[theorem]{Corollary}
\theoremstyle{definition}
\newtheorem{definition}[theorem]{Definition}

\theoremstyle{remark}
\newtheorem{remark}[theorem]{Remark}

\newcommand{\EOT}{\mathrm{EOT}}
\newcommand{\E}{\mathbb{E}}
\newcommand{\Val}{\mathrm{Val}}
\newcommand{\eps}{\epsilon}
\newcommand{\EOS}{\mathrm{EOS}}

\newcommand{\ACC}{\texttt{ACC}}
\newcommand{\TC}{\mathrm{TokenCost}}
\DeclareMathOperator{\med}{med}
\newcommand{\REJ}{\texttt{REJ}}
\newcommand{\pKt}{\mathrm{pKt}}
\newcommand{\pKtt}{\widetilde{\pKt}}

\title{The Value of a Prompt:\\
An LLM-Relative Kolmogorov-Complexity Approach}
  \author{Rafael Pass\footnote{Rafael Pass is supported in part by AFOSR Award FA9550-24-1-0267, ISF Award 2338/23 and ERC Advanced Grant KolmoCrypt - 101142322. Any opinions, findings and conclusions or recommendations expressed in this material are those of the author(s) and do not necessarily reflect the views of the United States Government, the AFOSR, the European Union or the European Research Council Executive Agency.
}\\Cornell Tech, Technion, TAU}
  \date{\today}

\begin{document}
\maketitle

\begin{abstract}
In a world where valuable artifacts are increasingly created, completed, or
processed by LLMs, the central economic question is not only what the LLM can
produce, but what \emph{value} remains in the inputs (i.e., the prompts) we
provide to it. Given a prompt, hint, critique, problem statement, or partial
solution that helps an LLM produce an artifact $z$---a proof, program, design,
or scientific hypothesis---how should we measure the value of that input?

Intuitively, an input is valuable when it makes the target artifact easier for
the model to generate: either by increasing its sampling probability, or by
reducing the thinking time needed to find it. We propose a computational
Levin--Kolmogorov complexity approach to this problem, by appropriately
replacing the universal Turing machine in the classical definitions by the LLM
itself. Concretely, we introduce an LLM-relative notion of \emph{probabilistic
Levin--Kolmogorov complexity} $\pKt$---treating the model's thinking as the
random tape of the program, and charging logarithmically for it in Levin's
manner---and define prompt value as algorithmic mutual information with respect
to $\pKt$. This captures the intuition above: a prompt having $b$ bits of value
for an artifact $z$ makes $z$ $2^b$ times ``easier to obtain'', by
multiplying the success probability by $2^b$, by dividing the required
computation by $2^b$, or by any corresponding tradeoff between probability and
computation.

In contrast to the classical notion of algorithmic mutual information, ours is efficiently estimable.
We additionally show that, under a natural reproduction experiment, a prompt
value of \(b\) bits means that reproducing \(z\) without the prompt has median
token cost \(2^b\) times that of reproducing it with the prompt.
\end{abstract}

\newpage
\section{Introduction}
Suppose a large language model (LLM) produces a valuable \emph{artifact},
represented for our purposes by a string \(z\)---for example, a proof of a
mathematical theorem, a computer program, a design, a drug, or a scientific
hypothesis.  The LLM operates in some fixed deployment context and receives an
additional input, referred to as the \emph{prompt} \(p\).\footnote{We use
``prompt'' broadly to mean any type of input supplied externally, whether by a
human or another resource.  For instance, if an LLM were directly connected to
a human brain, the resulting brain signals could be viewed as the ``prompt.''}
The prompt might be a hint, critique, example, or partial solution.  A natural
question arises:
\begin{quote}
\emph{How much value did the prompt contribute to the final artifact, relative to
what the same LLM could have produced without it?}
\end{quote}

This is increasingly an economic question. As LLM capabilities become
abundant, the value of human contributions may lie in choosing the right input
to provide to the model. Simply measuring the length of that input---for
example, by its token count---does not capture its value: a short hint can be
decisive, whereas a long prompt can be irrelevant or even harmful. Rather, a
useful measure should (1) compare the difficulty of producing the artifact
\emph{with} and \emph{without} the input, and (2) allow the model without the
additional input to compensate by \emph{thinking} longer, while charging it
for that additional thinking. In this respect, we follow the
\emph{value-of-computation} perspective of Halpern and
Pass~\cite{HP11,HP14} and the simulation paradigm underlying
\emph{zero-knowledge proofs}, which asks what can be efficiently generated
without access to the information in question~\cite{GMR}.

Said plainly, we seek a notion of value that measures how much the prompt
``helped'' the LLM produce the artifact, while taking computation into account.
In this work, we introduce such a measure of prompt value.

\subsection{From Kolmogorov complexity to an LLM-relative measure}
Classical algorithmic information theory provides a natural starting point.
Fix a universal Turing machine \(U\), and let \(K_U(x)\) denote the
Kolmogorov complexity of \(x\)
\cite{Solomonoff64,Kolmogorov65,Chaitin66}: the length of the shortest program
\(\pi\) that generates \(x\) (i.e., \(U(\pi)=x\)).  Define
\(K_U(x\mid y)\) analogously, with \(y\) supplied as auxiliary input (in other words, the shortest program that generates $x$ given $y$).
Following Kolmogorov, the \emph{algorithmic information}
\cite{ZL70,LiVitanyi} that \(p\) provides about \(z\) is
\begin{equation}\label{eq:classical}
I_U(p:z) \;:=\; K_U(z)-K_U(z\mid p).
\end{equation}
Thus, \(I_U(p:z)\) is the number of bits saved in describing \(z\) once \(p\)
is available, making it a natural measure of the value of the prompt \(p\) for
\(z\).

Two difficulties prevent direct use of this measure. First, Kolmogorov
complexity is uncomputable.
Second, it ignores the computational complexity of the program $\pi$, which makes it unsuitable for capturing ``computational gains".
The standard resource-bounded response to the second difficulty is time-bounded
Kolmogorov complexity \cite{Kolmogorov65,Ko86,Har83,Sipser83}: For a time limit $T$,
\begin{equation}
K^T_U(z \mid p) \;:=\; \min_{\substack{\pi:\; U(\pi,p)=z \\ T_U(\pi,p)\le T}} |\pi|.
\end{equation}
Levin's $Kt$ complexity \cite{Levin73} combines description length and
execution time into one quantity:
\begin{equation}
Kt_U(z \mid p) \;:=\; \min_{\pi:\; U(\pi,p)=z}
\bigl\{ |\pi| + \log_2 T_U(\pi,p) \bigr\}.
\end{equation}
Unfortunately, resource bounds do not make these notions easy to compute (although computable):
Allender et al.\ give worst-case hardness results for resource-bounded
Kolmogorov complexity \cite{ABKMR06}. Liu and Pass connect average-case
hardness of time-bounded Kolmogorov complexity to one-way functions, including
(conditional) equivalences on samplable distributions \cite{LP20,LP21,LP23}.
These results indicate that resource-bounding alone does not make the notion
efficiently computable.

\paragraph{Dealing with Non-thinking LLMs: LLM-Relative Kolmogorov Complexity}
As a warm-up, we first provide a notion of prompt value for \emph{LLMs without
thinking}.  By this we mean an autoregressive model that, given a context \(y\),
directly generates its output one token at a time: each token is sampled from a
distribution determined by \(y\) and the previously generated tokens, and
generation ends when the model emits a distinguished end-of-sequence token $(\EOS)$. In other words, there is no separate ``thinking stage" preceding the output.

Our first step is to make the above-mentioned algorithmic-mutual-information approach efficiently computable by
fixing the reference machine to be the actual LLM production process. When the
question is what an input contributed to \emph{this} process, the process
itself is the natural reference machine. Given a fixed LLM, a ``program'' is a
specification of the model's sampling randomness, which fully determines its
output; we specify the randomness in binary as a real number in $[0,1)$, and
the length of a program is the length of the shortest prefix of this number
that forces the output. Write $K_M(x\mid y)$ for the length of the shortest
such program forcing $x$ in context (i.e., given prompt) $y$, and define
$\Val_M(p;z) := K_M(z) - K_M(z \mid p)$ as the LLM-relative mutual
information.

Alongside it, we use an LLM-relative notion of \emph{a-priori} complexity, the
classical companion of program length in algorithmic information theory
\cite{ZL70,LiVitanyi}:
$\widetilde K_M(x\mid y):=-\log_2P_M(x\mid y)$,
where $P_M(x\mid y)$ denotes the probability that $M$ outputs $x$ given the
prompt $y$, and define the value
$\widetilde{\Val}_M(p;z):=\widetilde K_M(z)-\widetilde K_M(z\mid p)$ as the a-priori
LLM-relative analog of mutual information. As we show in
Section~\ref{sec:exact},
$0\le K_M(x\mid y)-\widetilde K_M(x\mid y)<2$; hence the a-priori analog closely
approximates the program-based notion of LLM-relative mutual information:
$\bigl| \Val_M(p;z) - \widetilde{\Val}_M(p;z) \bigr| < 2$.

Moreover, by definition,
\[
  \widetilde{\Val}_M(p;z)
  =
  \log_2\frac{P_M(z\mid p)}{P_M(z)},
\]
which can be computed directly by summing the base-two log-ratios of the
model's prompted and unprompted next-token probabilities along \(z\). In essence,
\(\widetilde{\Val}_M(p;z)\) has exactly the algebraic form of
\emph{pointwise mutual information} \cite{Fano61,CH90}; this recovers a
non-normalized variant of the ``author-contribution'' score of
\cite{Xie26}.  Thus, a score of \(b\) means that the prompt makes \(z\)
\(2^b\) times as likely to be output by the LLM as without the additional
prompt.  For these reasons, we take the a-priori notion as the basis for our
notion of prompt value.

\paragraph{The problem with thinking.}
Valuing prompts for an LLM with thinking is more complicated.  We model such
an LLM as a two-stage autoregressive process.  Given a context \(y\), the model
first generates a finite string \(H^y\) of thinking tokens and then emits a
distinguished end-of-thinking token \(\EOT\), which marks the transition to the
output stage.  The model then generates its output and terminates it with the
end-of-sequence token \(\EOS\). 
For any finite thinking string \(H\),
write
\[
  G_y(z\mid H)
  :=
  P_M(z\mid y\,H\,\EOT)
\]
for the probability that the output stage emits the artifact \(z\)
followed by \(\EOS\).

Thinking changes the valuation problem in two ways. First, the direct
calculation above no longer yields the overall probability that \(M\) produces
\(z\), because this probability must also average over the potentially long,
random thinking route $H$:  in context \(y\), it is
\[
  \E_{H^y}\!\left[G_y(z\mid H^y)\right].
\]
Although this yields a natural notion of prompt value, it is not the approach
we consider here: it averages together realized thoughts and can be
substantially influenced by rare, unusually successful routes.\footnote{The
marginal probability can be estimated without waiting for a rollout to emit
\(z\), by averaging the conditional probabilities
\(G_y(z\mid H^y)\) over sampled thoughts.  Each conditional probability can
itself be computed directly from the model's next-token probabilities along
\(z\).  Nevertheless, obtaining a reliable multiplicative estimate when the
marginal is extremely small may require a correspondingly large number of
samples.}
We instead view the realized thinking as the random tape of the program,
measure the conditional difficulty of producing \(z\) relative to that tape,
and then take the \emph{median} over the model's thinking randomness.  The
resulting quantity captures the difficulty on a \emph{typical} realized thought
and is insensitive to the one thought in a billion that happens to stumble
upon the decisive idea.

A second issue is that comparing the two contexts at a common truncation index $t$
misses the principal way many inputs help: a good hint does not merely make the
answer more probable at fixed effort, but may remove or lessen the need to think.
To capture this computation saving, the prompted and unprompted processes must
therefore be allowed to use different amounts of thinking---in particular, the
unprompted baseline may need to think longer---and each must be charged for the
computation it uses. The resulting comparison credits a prompt both when it
increases the conditional probability of the output and when it reduces the
computation needed to obtain it.

\paragraph{Our approach: realized-thought Levin complexity.}
Roughly speaking, we solve the first problem by viewing thinking not as part of the description of the program, but rather as \emph{randomness} for the program; that is, we consider a \emph{probabilistic} notion of Kolmogorov complexity in the spirit of \cite{GKLO22}, though defined somewhat differently. We solve the second one by considering Levin's notion of $Kt$-complexity \cite{Levin73} (and thus ``charging'' logarithmically for the computation): in essence, we consider an \emph{LLM-relative notion of probabilistic (a-priori) Levin-Kolmogorov complexity $pKt$}, and our measure is simply algorithmic mutual information defined w.r.t. it.

Concretely, fix the artifact $z$---in the motivating scenario, the output of
a prompted run. 
Sample a thinking \emph{rollout}: let the model think freely until it emits a stop token,
writing $H^y$ for the realized thinking and $H^y_{\le t}$ for the first $t$ tokens; think of this as sampling randomness for the program. Next, note that summing the negative base-two logarithms of the model's next-token probabilities along $z$, including the probability of termination after $z$, computes exactly the
a-priori complexity
$\widetilde K_M\bigl(z \mid y\, H^y_{\le t}\, \EOT\bigr)
= -\log_2 P_M\bigl(z \mid y\, H^y_{\le t}\, \EOT\bigr)$
of the artifact for a model that has already thought $H^y_{\le t}$ (and recall that a-priori complexity corresponds to our notion of program length). We next charge thinking through an externally specified ``token-equivalent'' cost function $\kappa(t)$ and define the
\emph{realized-thought Levin complexity} of $z$ in context $y$ along the
thinking route $H^y$,
\begin{equation}\label{eq:introkt}
\widetilde{Kt}^{\kappa}_M\bigl(z \mid y;\, H^y\bigr)
\;:=\;
\min_{t \in \mathbb N_0}
\Bigl\{
\underbrace{\widetilde K_M\bigl(z \mid y\, H^y_{\le t}\, \EOT\bigr)}_{\text{``description length"}}
\;+\;
\underbrace{\log_2 \kappa(t)}_{\text{log running time}}
\Bigr\} ,
\end{equation}
(i.e., we employ Levin's combination of description length and log running
time, in analogy with the Levin's $Kt$ notion, $Kt(x) = \min_t \{K^t(x) + \log_2 t\}$).

The above quantity is random through the sampled rollout \(H^y\). We summarize
this randomness by taking the median over the model's thinking, defining an
\emph{LLM-relative notion of probabilistic (a-priori) Levin--Kolmogorov complexity}:
\begin{equation}\label{eq:intropkt}
\pKtt^{\kappa}_M\bigl(z \mid y\bigr)
\;:=\;
\med\Bigl[\,\widetilde{Kt}^{\kappa}_M
\bigl(z \mid y;\,H^y\bigr)\,\Bigr].
\end{equation}
The \emph{prompt value} is then simply algorithmic
mutual information with respect to it,
\begin{equation}\label{eq:introvalue}
\widetilde{\Val}^{\kappa}_M(p; z)
\;:=\;
\pKtt^{\kappa}_M(z)
\;-\;
\pKtt^{\kappa}_M(z \mid p) ,
\end{equation}
in exact analogy with the LLM-relative mutual information
$\widetilde{\Val}_M = \widetilde K_M(z) - \widetilde K_M(z \mid p)$ of the
no-thinking case, and with $\pKtt$ in place of $\widetilde K$ as the notion of
description length.

Note that a prompt is now credited both when it (1) makes the
artifact more probable given the thinking realized and (2) when it eliminates
thinking that the unprompted side must otherwise pay for; and the unprompted
side can compensate for a missing hint by thinking longer, at a price.

As we show, this measure is efficiently estimable in the following
sense. For any sampled thinking route, the realized-thought Levin complexity is
computed exactly from the model's next-token probabilities at each
truncation of the realized thinking: 
although the minimization in~\eqref{eq:ktdef} ranges over all of
\(\mathbb N_0\), only \(t=0,\ldots,S\) need be evaluated for a thinking route of
length \(S\), since every \(t>S\) is weakly dominated by \(t=S\). Moreover, with
$O(\zeta^{-2}\log(1/\eta))$ independent rollouts per side, the empirical
$\delta$-quantile lies, with probability at least $1-\eta$, between the
$(\delta-\zeta)$- and $(\delta+\zeta)$-quantiles of the true distribution.
Applying this guarantee with and without the prompt yields corresponding bounds on the prompt-value estimate.

\paragraph{An economic interpretation of prompt value: token-cost savings.}
We finally consider an economic notion of token cost for generating an artifact
$z$ given a prompt $y$. We first define the cost of reproducing an artifact $z$
given an input $y$ and a realized thinking route $H$ by the expenditure of an
experiment: sample repeated independent attempts, conditional on $y$ and $H$,
until $z$ is reproduced. Each attempt is assigned the declared
thinking-token-equivalent charge $\kappa(|H|)$, so the expected cost of the
repeated sampling process is
$\TC_y(z; H) := \E[N\kappa(|H|)]$. 
Writing
$\TC^*_y(z; H^y) := \min_{0\le t\le |H^y|} \TC_y(z; H^y_{\le t})$ for the
cost at the best prefix of the realized thinking, we show that
\[
2^{\,\pKtt^{\kappa}_M(z \mid y)} \;=\; \med\bigl[\, \TC^*_y(z; H^y) \,\bigr];
\]
that is, the $pKt$-complexity notion is  the logarithm of the \emph{typical} token
expenditure of reproducing $z$ in context $y$. Prompt value is then, by
definition, a ratio of such costs,
\[
2^{\,\widetilde{\Val}^{\kappa}_{M}(p;z)}
\;=\;
\frac{\med_{}\bigl[\TC^*_\eps(z; H^\eps)\bigr]}
     {\med_{}\bigl[\TC^*_p(z; H^p)\bigr]} ,
\]
so a prompt value of $b$ means that reproducing the artifact without the prompt
typically costs $2^b$ times more tokens than with it.

\paragraph{Medians, $\delta$-quantiles, and why not expectations?}
The median reports the typical realization, insensitive to the one thought in a
billion that stumbles onto the decisive idea; thus, the median is arguably a more relevant statistic than expectation in our context. Using a quantile rather than an expectation is also important for the cost interpretation: quantiles commute with exponentiation, whereas expectations do not.

Nothing hinges on the median in particular: every
$\delta$-quantile yields a complexity notion satisfying the same cost identity, and
we more generally define $\pKtt^{\kappa}_{M,\delta}$ by replacing the median by the $\delta$-quantile of the distribution.

\paragraph{What counts as the ``artifact''.}
Our measure is defined relative to a declared artifact \(z\), and the choice of
what counts as the artifact is a substantive modeling decision. Otherwise, it
is easy to manufacture artifacts and prompts having large prompt value. For
instance, let \(z\) be a string of \(n\) tokens drawn uniformly at random, and
let \(p\) supply \(z\) in a form that causes the model to reproduce it reliably.
For a typical such string,
\(\widetilde K_M(z)\) is at least
\(n\log_2|\Sigma|-O(\log n)\) bits with high probability, whereas
\(\widetilde K_M(z\mid p)\) is small. The prompt therefore receives nearly the
full description length of the target even though the target is merely a random string.

The same construction can be attached to a genuine artifact. Suppose that an
LLM, without a prompt, produces a proof \(z^\star\) of a conjecture of
Erd\H{o}s. Let \(r\) be a random string and declare the scored artifact to be
\(z=z^\star\mathbin{\|}r\), with \(r\) embedded as semantically inert text so
that \(z\) remains a valid presentation of the proof. Now supply \(r\) in a
form that causes the model to reproduce it reliably. Without the prompt, the
random suffix typically contributes roughly
\(|r|\log_2|\Sigma|\) bits of description length; once it is supplied,
reproducing it is cheap. The prompt therefore collects nearly the full value of
the suffix even though it played no role in producing the mathematical content
\(z^\star\).

The measure is nevertheless behaving as intended: it prices the description of
the declared artifact. The burden therefore falls on the declaration of what
constitutes that artifact.

When possible, a simple remedy is to take \(z\) to be a canonical
representation of the produced object rather than the particular string the
model happened to emit. When a verifier for the ``artifact class" is available, this is immediate: let
\(z\) be the verifier's canonical verdict, and require the object itself to appear in the realized thinking. Semantically inert padding then cannot appear in \(z\);
see Section~\ref{sec:canonical}.

When no suitable canonicalization or verifier is available, one possible
approach is \emph{semantic re-randomization}: apply a declared,
prompt-independent rewriting procedure (e.g., use an LLM) that preserves the artifact's semantic
content while varying its surface form, and evaluate the prompt after this
transformation. The hope is that semantically inert padding will not survive
such rewriting, so that a prompt supplying only noise receives no value,
whereas a prompt supplying a substantive idea remains useful across different
renderings. We leave the formalization and evaluation of semantic
re-randomization for future work; throughout this paper, we simply assume that
the artifact is specified exogenously.

\paragraph{Relation to prior work by Xie et al.}
A recent work by Xie et al.\ \cite{Xie26} introduces a measure of human
contribution in AI-assisted content generation and evaluates it
experimentally. Our no-thinking value is an unnormalized
version of their score (they additionally divide by the output's
self-information); our treatment thus provides an algorithmic-information theoretic
foundation of that numerator. We emphasize that their score contains no explicit charge for thinking or computation;
dealing with thinking is our main contribution.

\paragraph{Paper outline.}
Section~\ref{sec:exact} develops our no-thinking notion and provides its algorithmic foundation.
Section~\ref{sec:thinking} introduces the thinking process, rollouts,
realized-thought Levin complexity,  our probabilistic Levin complexity notion
$\pKtt^{\kappa}_M$ and our final notion of prompt value.
Section~\ref{sec:protocol} gives the estimation protocol and its guarantee.
Section~\ref{sec:tokencost} supplies the economics, defining the cost of
reproducing an artifact by the expenditure of an experiment and showing that
$2^{\pKtt}$ is exactly the typical such cost, so that the prompt value is a ratio
of typical costs.
Section~\ref{sec:experiment} reports a small experiment on GSM8K, a benchmark dataset of grade-school mathematics word problems with step-by-step reference solutions, and Section~\ref{sec:related} discusses related work.

\paragraph{Use of AI.}
While the ideas are my own, large language models, principally ChatGPT and Claude, were extensively used
in drafting, revising, and editing this manuscript (including in expanding proof
sketches into complete proofs). Claude and ChatGPT also implemented the experiments and produced the plots (based on my directions). I take full responsibility for any errors and oversights.

\section{Prompt Value for Non-Thinking LLMs: A Warm-Up}\label{sec:exact}

This section develops a notion of prompt value for non-thinking LLMs; it serves as a warm-up for our actual notion that deals also with LLMs with thinking. We start by defining a notion of a (non-thinking) LLM:

\begin{definition}[Autoregressive LLM]\label{def:llm}
Fix a finite token alphabet $\Sigma$ and a distinguished end-of-sequence token
$\EOS \notin \Sigma$, and let $\Gamma := \Sigma \cup \{\EOS\}$. An
\emph{autoregressive LLM} $M$ specifies, for every context $c \in \Sigma^*$, a
probability distribution $P_M(\cdot \mid c) \in \Delta(\Gamma)$ over the next
token. A finite output is a string $x = (x_1,\dots,x_n) \in \Sigma^*$, generated
by emitting $x_1,\dots,x_n$ and then $\EOS$; its probability in context
$y \in \Sigma^*$ is
\begin{equation}\label{eq:seqprob}
P_M(x \mid y) := \Bigl( \prod_{i=1}^n P_M(x_i \mid y x_{<i}) \Bigr)
P_M(\EOS \mid y x),
\qquad x_{<i} := (x_1,\dots,x_{i-1}).
\end{equation}
We assume \emph{proper termination}: for every context $y$, repeated next-token
sampling emits $\EOS$ almost surely, with a finite expected number of tokens
before termination.
\end{definition}

Any fixed background context---chat template, task preamble,
or deployment-specific instruction---may be absorbed into the definition of
the reference model \(M\).  Write \(\eps\) for the empty additional prompt and
abbreviate $P_M(\cdot):=P_M(\cdot\mid\eps)$.
Thus \(P_M(\cdot)\) denotes the model's output distribution under the fixed
background with no additional prompt, whereas \(P_M(\cdot\mid p)\) denotes the
distribution of the same model under the same background after receiving the
additional prompt \(p\).  We use the same convention for all complexity
quantities below: an omitted conditioning context means conditioning on
\(\eps\).  

\paragraph{An LLM-relative notion of Kolmogorov complexity.}
In ordinary Kolmogorov complexity, a universal Turing machine is the machine
that interprets and executes programs.  Here we instead use the LLM (together
with its sampling procedure) as the underlying machine.  A program specifies
the randomness used to sample from the LLM and thereby determines its
execution.  We represent the complete sampling randomness by a real number
\(\omega\in[0,1)\), written in binary; a finite binary program specifies an
initial segment of that binary expansion, and hence a dyadic subinterval of
possible values of \(\omega\).

\begin{definition}[Output intervals and binary programs]\label{def:programs}
Fix a total order on $\Gamma$. For an output $x \in \Sigma^*$ write
$\bar x := x\,\EOS$, and order outputs by the lexicographic order on the strings
$\bar x$ induced by the order on $\Gamma$. The \emph{output interval} of $x$ in
context $y$ is
\begin{equation}\label{eq:interval}
I_y(x) := \bigl[\, F_M(x \mid y),\; F_M(x \mid y) + P_M(x \mid y) \,\bigr),
\qquad
F_M(x \mid y) := \!\!\sum_{x' \,:\, \bar{x}' < \bar x}\!\! P_M(x' \mid y).
\end{equation}
A \emph{binary program} is a finite string $\pi \in \{0,1\}^*$; with
$m := |\pi|$ and $N(\pi) := \sum_{j=1}^m \pi_j 2^{m-j}$, it names the dyadic
interval
\begin{equation}\label{eq:dyadic}
D_\pi := \Bigl[ \frac{N(\pi)}{2^m}, \frac{N(\pi)+1}{2^m} \Bigr),
\end{equation}
so that $D_\eps = [0,1)$ for the empty program and $|D_\pi| = 2^{-|\pi|}$.
A binary program $\pi$ \emph{forces} output $x$ under context $y$ if
$D_\pi \subseteq I_y(x)$. 
\end{definition}

A binary program can be evaluated efficiently whenever
\(M\)'s next-token distributions are efficiently computable.  Given
\(\pi\in\{0,1\}^*\), let
\[
  \omega_\pi:=\frac{N(\pi)}{2^{|\pi|}}
\]
be the left endpoint of \(D_\pi\).  Starting from the interval \([0,1)\), run
the model's sampler deterministically using \(\omega_\pi\) as its sampling
randomness.  After each generated prefix, compute \(M\)'s next-token
distribution, partition the current interval in the fixed token order into
consecutive left-closed, right-open subintervals having the corresponding
relative lengths, emit the token whose subinterval contains \(\omega_\pi\),
and continue with that subinterval until \(\EOS\) is emitted.  (In essence, this is an instance of the standard arithmetic-decoding procedure~\cite{WNC87}.)

If \(D_\pi\subseteq I_y(x)\), then \(\omega_\pi\in I_y(x)\), so the procedure
recovers \(x\).  It uses one next-token-distribution evaluation per emitted
token.  Consequently, whenever \(M\)'s next-token distributions are
polynomial-time computable, \(x\) can be recovered from \(M\), \(y\), and
\(\pi\) in time polynomial in \(|y|+|\pi|+|x|\).  Thus our LLM-relative
programs are efficiently evaluable whenever \(M\)'s next-token distributions
are.

We are now ready to state the notion of LLM-relative K-complexity, and the notion of algorithmic prompt value for non-thinking LLMs:
\begin{definition}[LLM-relative K-complexity and non-thinking prompt value]\label{def:KM}
Define
\begin{equation}
K_M(x \mid y) := \min\{ |\pi| : D_\pi \subseteq I_y(x) \},
\end{equation}
with value $\infty$ if no such $\pi$ exists. For a prompt $p \in \Sigma^*$ and
an artifact string $z \in \Sigma^*$, define the \emph{non-thinking program-based prompt value}
\begin{equation}
\Val_M(p;z) := K_M(z) - K_M(z \mid p).
\end{equation}
\end{definition}
That is, the (program-based) prompt value is simply the notion of algorithmic mutual information using LLM-relative K-complexity.

Given access to \(M\)'s full next-token distributions,
\(K_M(x\mid y)\) is computable in polynomial time.\footnote{First
compute \(I_y(x)\) by successive interval refinement, the standard construction
underlying arithmetic coding~\cite{WNC87}: Starting from
\(J_{\eps}=[0,1)\), process the tokens of \(x\,\EOS\) in order.  After a prefix
\(u\), partition \(J_u\), in the fixed order on \(\Gamma\), into subintervals
whose relative lengths are the probabilities \(P_M(a\mid yu)\), and let
\(J_{ua}\) be the subinterval corresponding to the next token \(a\).  The
interval obtained after processing \(x\,\EOS\) is exactly \(I_y(x)\).   
Now write \(I_y(x)=[a,b)\).  For each \(m\in\mathbb N_0\), let
$k_m:=\left\lceil 2^m a\right\rceil$.
Note that the interval
  $\left[k_m2^{-m},(k_m+1)2^{-m}\right)$
is the leftmost dyadic interval of length \(2^{-m}\) whose left endpoint is at
least \(a\).  Therefore, some dyadic interval of length \(2^{-m}\) is
contained in \([a,b)\) if and only if $(k_m+1)2^{-m}\leq b$.
Let $m^\star
  :=
  \min\left\{
    m\in\mathbb N_0:
    (k_m+1)2^{-m}\leq b
  \right\}$.
Then \(K_M(x\mid y)=m^\star\), and the \(m^\star\)-bit representation of
\(k_{m^\star}\) is a shortest program forcing \(x\).
} and hence so is program-based prompt value as well.  This
computation, however, requires the model's full next-token distributions
\emph{without approximation}, which are typically unavailable through an LLM
API.
To address this issue, we consider a common variant of Kolmogorov complexity,
\emph{a-priori} complexity~\cite{ZL70,LiVitanyi}: the log-measure of the set
of sampling randomness that produces the output.  We again provide an
LLM-relative notion of this.

\begin{definition}[A priori LLM-relative complexity and measured value]\label{def:measured}
For a context $y$ and string $z$, define
\begin{equation}
\widetilde{K}_M(z \mid y) := -\log_2 P_M(z \mid y),
\end{equation}
with $\widetilde{K}_M(z \mid y) := \infty$ if $P_M(z \mid y) = 0$. For a prompt
$p$ and string $z$, the \emph{non-thinking prompt value} is
\begin{equation}
\widetilde{\Val}_M(p;z) := \widetilde{K}_M(z) - \widetilde{K}_M(z \mid p).
\end{equation}
\end{definition}

By expanding out the definition, we directly get:
\begin{corollary}[Likelihood-ratio form]\label{cor:ratio}
If $z$ has positive probability under both $\epsilon$ and $p$, then for
$z = (z_1,\dots,z_n)$,
\begin{equation}\label{eq:tokenwise}
\widetilde{\Val}_M(p;z) = \log_2 \frac{P_M(z \mid p)}{P_M(z)}
= \sum_{i=1}^n
\log_2 \frac{P_M(z_i \mid p\, z_{<i})}{P_M(z_i \mid z_{<i})}
+ \log_2 \frac{P_M(\EOS \mid p\, z)}{P_M(\EOS \mid z)}.
\end{equation}
\end{corollary}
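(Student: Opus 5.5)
The plan is to unfold Definition~\ref{def:measured} and then apply the product formula~\eqref{eq:seqprob} of Definition~\ref{def:llm} separately to the prompted and unprompted probabilities. No earlier result beyond these definitions is needed.

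\emph{First equality.} Since $P_M(z\mid p)>0$ and $P_M(z)>0$ by hypothesis, both a-priori complexities are finite real numbers. Hence there is no $\infty-\infty$ ambiguity, and
\[
\widetilde{\Val}_M(p;z)=-\log_2 P_M(z)+\log_2 P_M(z\mid p)=\log_2\frac{P_M(z\mid p)}{P_M(z)}.
\]

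\emph{Second equality.} Recall the convention that an omitted context means the empty context $\eps$, so the unprompted factors are $P_M(z_i\mid z_{<i})$ and $P_M(\EOS\mid z)$. By~\eqref{eq:seqprob} with $y=p$ and with $y=\eps$,
\[
P_M(z\mid p)=\Bigl(\prod_{i=1}^n P_M(z_i\mid p\,z_{<i})\Bigr)P_M(\EOS\mid p\,z),
\qquad
P_M(z)=\Bigl(\prod_{i=1}^n P_M(z_i\mid z_{<i})\Bigr)P_M(\EOS\mid z).
\]
Since each full product is positive, every factor in it is positive. So the ratio splits factorwise into $n+1$ well-defined ratios, and taking $\log_2$ turns the product into the stated sum of $n$ token-wise log-ratios plus the termination term.

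The only point needing care is this positivity bookkeeping: it guarantees that no individual factor vanishes, so each logarithm in the sum is finite and the splitting is legitimate. Otherwise the argument is purely algebraic, and I expect no real obstacle.
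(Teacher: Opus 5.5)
Your proposal is correct and matches the paper's approach: the paper obtains the corollary just by expanding Definition~\ref{def:measured} together with the product formula~\eqref{eq:seqprob} for $y=p$ and $y=\eps$, exactly as you do. Your positivity bookkeeping (every factor is positive because each full product is) is a detail the paper leaves implicit, and it is correctly handled.
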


Note that $\widetilde{\Val}_M$ has exactly the algebraic form of the
pointwise-mutual-information (PMI) functional \cite{Fano61,CH90}, applied to
the model's prompted and unprompted distributions, and can be thought of as a
non-normalized version of the score of \cite{Xie26}.

The main result of this section is that the two notions of LLM-relative K-complexity (program-based and a-priori-based) and thus also the two notions of prompt value, coincide up to an additive
constant, much like standard a-priori complexity and (prefix) Kolmogorov
complexity \cite{ZL70}.

\begin{theorem}[Algorithmic semantics of prompt value]\label{thm:semantics}
\leavevmode
\begin{enumerate}
\item[(i)] For every context $y$ and string $z$ with $P_M(z \mid y) > 0$,
\begin{equation}\label{eq:codinggap}
0 \;\le\; K_M(z \mid y) - \widetilde{K}_M(z \mid y) \;<\; 2.
\end{equation}
\item[(ii)] Consequently, for every prompt $p$ and exact string $z$ in the
support of both distributions,
\begin{equation}\label{eq:threebits}
\bigl| \Val_M(p;z) - \widetilde{\Val}_M(p;z) \bigr| < 2.
\end{equation}
\end{enumerate}
\end{theorem}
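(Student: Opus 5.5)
Part (i) is a statement about a half-open interval $I_y(z)=[a,b)$ with $b-a=P:=P_M(z\mid y)>0$: we must show that the shortest dyadic interval $D_\pi$ contained in $[a,b)$ satisfies $P/4 < 2^{-|\pi|}\le P$. Write $m:=K_M(z\mid y)$, so $K_M(z\mid y)-\widetilde K_M(z\mid y)=m+\log_2 P$. The plan is to prove the two inequalities separately and then derive (ii) by subtracting two instances of (i).

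For the lower bound $0\le m+\log_2 P$, we use only containment. If $D_\pi\subseteq I_y(z)$, then $2^{-|\pi|}=|D_\pi|\le |I_y(z)|=P$. Hence $|\pi|\ge -\log_2 P$ for every forcing program, and in particular $m\ge \widetilde K_M(z\mid y)$.

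For the strict upper bound $m+\log_2 P<2$, we exhibit a forcing program of length $m_0:=\lceil -\log_2 P\rceil+1$. Then $m_0< -\log_2P+2$, so $2^{-m_0}\le P/2$. With $k:=\lceil 2^{m_0}a\rceil$ as in the footnote computing $K_M$, the dyadic interval $[k2^{-m_0},(k+1)2^{-m_0})$ has left endpoint at least $a$. Its right endpoint is
\[
(k+1)2^{-m_0}< a+2\cdot 2^{-m_0}\le a+P=b,
\]
using $k<2^{m_0}a+1$. So it lies inside $[a,b)$, and the $m_0$-bit binary representation of $k$ is a program $\pi$ with $D_\pi\subseteq I_y(z)$. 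Two small checks remain. First, $k<2^{m_0}$, which follows from $(k+1)2^{-m_0}\le b\le 1$, so $k$ fits in $m_0$ bits. Second, $\infty$ does not arise, since $P>0$. This gives $m\le m_0<\widetilde K_M(z\mid y)+2$.

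For (ii), set $\Delta(y):=K_M(z\mid y)-\widetilde K_M(z\mid y)\in[0,2)$ and apply (i) with $y=\eps$ and with $y=p$; both are finite because $z$ lies in both supports. Then
\[
\Val_M(p;z)-\widetilde{\Val}_M(p;z)=\Delta(\eps)-\Delta(p)\in(-2,2).
\]

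The only delicate step is the strict inequality in the upper bound. It needs the strict bound $k<2^{m_0}a+1$ rather than $\le$, and it needs the choice $m_0=\lceil-\log_2P\rceil+1$ instead of $\lceil -\log_2 P\rceil$. The shorter length can fail by one bit because of the alignment of $a$ relative to the dyadic grid, which is exactly why the constant in (i) is $2$ rather than $1$.
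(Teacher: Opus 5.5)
Your proposal is correct and matches the paper's proof almost exactly. The paper's grid size $s=2^{-(j+1)}$, with $j$ the largest integer satisfying $2^{-j}>P/2$, is precisely your $2^{-m_0}$ with $m_0=\lceil -\log_2 P\rceil+1$; it likewise rounds the left endpoint up to that grid, uses $2s\le P$ to fit the dyadic interval inside $[a,b)$, and gets (ii) by subtracting the two instances of (i), differing only in that it packages the interval argument as two standalone lemmas before specializing to $J=I_y(z)$.
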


The formal proof is given in Appendix~\ref{app:dyadic}. 
For a proof sketch, recall that we associate with an output \(z\) an interval
\(I_y(z)\subseteq[0,1)\) whose length is \(P_M(z\mid y)\), and that a binary
program \(\pi\) forces \(z\) when its dyadic interval \(D_\pi\), of length
\(2^{-|\pi|}\), lies entirely inside \(I_y(z)\). The lower bound in part~(i)
follows because \(D_\pi\) cannot be longer than \(I_y(z)\). For the upper
bound, the proof rounds the left endpoint of \(I_y(z)\) to a dyadic grid and
its length down to an inverse power of two; the resulting two factor-\(2\)
losses suffice. Part~(ii) then follows by taking differences.

\begin{remark}[Differences with standard algorithmic mutual information]\label{rem:leverage}
Two disanalogies with universal-machine algorithmic information are
noteworthy. First, non-thinking prompt value can be (very) negative: the conditional
distribution is the model's actual behavior under the prompt, and the model
cannot be assumed to ignore a misleading input at constant cost. Second, a
very short input can carry a very large value: a one-token trigger may raise
the probability of a long output by hundreds of bits. This second phenomenon
is similar to what happens with time-bounded Kolmogorov complexity, where
getting a short key may unlock a long encrypted message \cite{LM93}.
\end{remark}

We observe that \(\widetilde K_M\), and hence non-thinking prompt value, can be efficiently computed from the model's next-token log probabilities along \(z\,\EOS\). Moreover, in contrast to $K_M$, \emph{approximate} log probabilities suffice to approximate these quantities: if every next-token log probability is known within additive error \(\eta\), then \(\widetilde K_M(z\mid y)\) is determined within additive error \(|z\,\EOS|\eta\), and the resulting prompt value within additive error \(2|z\,\EOS|\eta\).

\begin{theorem}[Efficient evaluation of non-thinking prompt value]
\label{thm:efficient}
Let \(z=(z_1,\ldots,z_n)\) have positive probability under both \(\eps\) and
\(p\).  Given exact next-token log probabilities,
\(\widetilde{\Val}_M(p;z)\) can be computed exactly using \(2(n+1)\) queries
and \(O(n)\) arithmetic operations.  More generally, if each queried log
probability is approximated within additive error \(\rho\), the same computation
approximates \(\widetilde{\Val}_M(p;z)\) within \(2(n+1)\rho\).
\end{theorem}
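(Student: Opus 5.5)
The proof will rest on the token-wise likelihood-ratio form of Corollary~\ref{cor:ratio}. Since \(z\) has positive probability under both \(\eps\) and \(p\), every factor in the product~\eqref{eq:seqprob} is positive in both contexts, so all the logarithms are finite. We then have
\[
\widetilde{\Val}_M(p;z)=\sum_{i=1}^{n+1}\bigl(\ell^p_i-\ell^\eps_i\bigr),
\]
where \(\ell^y_i:=\log_2 P_M(z_i\mid y\,z_{<i})\) for \(i\le n\), and \(\ell^y_{n+1}:=\log_2 P_M(\EOS\mid y\,z)\). (If the API reports natural logarithms, we divide by \(\ln 2\); this conversion is absorbed into the arithmetic count.)

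\emph{Exact computation.} Each query returns one next-token log probability for one context. For each of the two contexts \(y\in\{\eps,p\}\), we query the \(n+1\) contexts \(y\), \(y z_1\), \(\ldots\), \(y z_{\le n}\). For \(i\le n\) we read off the log probability of \(z_i\), and for the final context we read off that of \(\EOS\). This gives exactly \(2(n+1)\) queries. We then form the \(n+1\) differences and add them, which takes \(2(n+1)-1=O(n)\) arithmetic operations. By Corollary~\ref{cor:ratio}, the result equals \(\widetilde{\Val}_M(p;z)\) exactly.

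\emph{Approximate computation.} Suppose the queries return \(\hat\ell^y_i\) with \(|\hat\ell^y_i-\ell^y_i|\le\rho\). We run the same computation on these values, outputting \(\widehat{\Val}:=\sum_i(\hat\ell^p_i-\hat\ell^\eps_i)\). By the triangle inequality,
\[
\bigl|\widehat{\Val}-\widetilde{\Val}_M(p;z)\bigr|\le\sum_{i=1}^{n+1}\bigl(|\hat\ell^p_i-\ell^p_i|+|\hat\ell^\eps_i-\ell^\eps_i|\bigr)\le 2(n+1)\rho.
\]

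The only point needing care is the cost model. We have to state that one query means one (context, token) log-probability evaluation, and that arithmetic is exact real arithmetic. We also need positivity, so that no \(\log 0\) term arises. Beyond that, there is no real obstacle.
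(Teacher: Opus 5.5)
Your proposal is correct and takes essentially the same route as the paper: expand via Corollary~\ref{cor:ratio} into a signed sum of the $2(n+1)$ prompted and unprompted log probabilities (the $n$ tokens of $z$ plus the final $\EOS$), sum them for the exact claim, and apply the triangle inequality for the approximate one. One small caveat concerns your natural-log remark. It is harmless for exact computation, but if the $\rho$-accurate values are natural logarithms, converting to bits inflates the bound by a factor of $1/\ln 2$; so $\rho$ should be understood as error in the base-two log probabilities, as your definition of $\ell^y_i$ implicitly assumes.
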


\begin{proof}
By \eqref{eq:tokenwise}, \(\widetilde{\Val}_M(p;z)\) is a signed sum of the
prompted and unprompted log probabilities of the \(n\) tokens of \(z\) and the
final \(\EOS\), giving \(2(n+1)\) terms.  The exact claim follows by summing
them, and the approximation claim follows from the triangle inequality.
\end{proof}

In the sequel, we take  $\widetilde{\Val}_M$ as the starting point for our notion of prompt value for LLMs with thinking,
with Theorem~\ref{thm:semantics} guaranteeing that the reported quantities
track the algorithmic (program-based) values within two bits.

\section{Thinking and the prompt value}\label{sec:thinking}

We move on to consider LLMs with thinking.

\subsection{LLMs with thinking}\label{sec:process}
We start by formalizing a thinking LLM. Informally, an LLM with thinking operates in two stages. It first generates a
thinking route \(H\), stopping when it emits a distinguished
end-of-thinking token \(\EOT\) (or \(\EOS\)). It then generates its output
conditioned on the realized route \(H\).

\begin{definition}[LLM with thinking]\label{def:thinking}
An \emph{LLM with thinking} consists of an autoregressive LLM \(M\), as in
Definition~\ref{def:llm}, together with a distinguished end-of-thinking token
\(\EOT\in\Sigma\). In context \(y\), the thinking stage samples tokens
autoregressively from \(M\) until either \(\EOT\) or \(\EOS\) is emitted. Let
\(H^y\) denote the tokens generated before this stop token, which is not
included in \(H^y\); we call one execution of the thinking-stage sampler a \emph{rollout}, and call
the resulting token sequence \(H^y\) its \emph{thinking route}.

After the thinking stage, the output stage runs \(M\) in context
\(y\,H^y\,\EOT\) and generates an output terminating with \(\EOS\). (Thus, even
if the thinking stage stopped at \(\EOS\), the declared two-stage process
proceeds by appending \(\EOT\) and running the output stage.)

Write \(S:=|H^y|\), and, for \(t\in\mathbb N_0\), let \(H^y_{\leq t}\) be the
first \(\min\{t,S\}\) tokens of \(H^y\). For any finite thinking route \(H^y\),
define the output-stage probability
\[
G_y(z\mid H^y):=P_M(z\mid y\,H^y\,\EOT).
\]
\end{definition}

\paragraph{External oracle calls.}
The same treatment extends to thinking processes that interact with external
tools or oracles. In this case, \(H^y\) denotes the realized interaction
transcript, including both the calls and their responses, and its distribution
is induced by the combined LLM--oracle process. We never evaluate the
probability of \(H^y\) itself; at each prefix, we evaluate only the probability
of producing \(z\) conditional on that realized transcript. Thus the oracle
responses need not be generated autoregressively by \(M\).
We require only sampling access, not likelihood access, to the oracle.
The combined LLM--oracle process must be independently restartable across
rollouts.

The only additional ingredient is cost accounting. The token-equivalent time
function introduced below should charge for the oracle calls as well as for the
LLM's computation. Formally, one may allow the cost to depend
nondecreasingly on the realized transcript prefix. Equivalently, one may pad
the accounting timeline of each oracle call with a number of virtual thinking
tokens corresponding to its declared token-equivalent cost. These virtual
tokens are used only for accounting and are not supplied to the model. Under
this interpretation, the definitions and results below apply unchanged.

For simplicity, we restrict the subsequent formal treatment to thinking without external oracle calls.

\subsection{Realized-thought Levin complexity}\label{sec:cost}
We proceed to formalizing an LLM-relative notion of Levin-complexity \cite{Levin73} for LLMs with thinking.
Towards this goal, we start by providing a notion of \emph{realized-thought
Levin complexity}, for a \emph{fixed} (i.e., realized) thinking route $H$. (Looking forward, in Section~\ref{sec:pkt}, we take
$H$ to be a random variable (over thinking) and summarize it by its median, obtaining the
complexity notion $\pKtt^{\kappa}_M$; the prompt value is then algorithmic
mutual information with respect to that notion.)

Charging thinking requires a declared unit of ``running-time"; we count time in
``thinking-token'' equivalent units through an externally specified cost
function:

\begin{definition}[Token-equivalent time]\label{def:kappa}
A \emph{token-equivalent time function} is a family
$\kappa_{y,z} : \mathbb N_0 \to [1,\infty)$, indexed by contexts $y$ and
artifacts $z$, nondecreasing in $t$. When $y$ and $z$ are clear from
context, we write simply \(\kappa(t)\).
\end{definition}

\(\kappa(t)\) is simply the declared token-equivalent cost assigned to a
production that uses \(t\) tokens of thinking in context \(y\). A natural
choice is the \emph{generated-thought} cost
\[
  \kappa_{y,z}^{\mathrm{gen}}(t)
  =
  c_{\mathrm{pre}}|y|
  +
  c_{\mathrm{dec}}\left(t+1+
  \left|z\,\EOS\right|\right).
\]
Here \(c_{\mathrm{pre}}\) is the cost per token processed during prefill,
\(c_{\mathrm{dec}}\) is the cost per sequentially generated token, and the
additional \(1\) accounts for \(\EOT\). This convention charges for prefilling
the supplied context and then sequentially generating the thinking, \(\EOT\),
and the artifact. Because prefill processes many tokens in parallel whereas
decoding is sequential, \(c_{\mathrm{dec}}/c_{\mathrm{pre}}\) can naturally be
much larger than one. (In Section~\ref{sec:experiment} we normalize
\(c_{\mathrm{pre}}=1\), use \(c_{\mathrm{dec}}=32\) as a representative
default, and report sensitivity to this choice.)

Another simple choice is the \emph{prefix-prefill} cost
\[
  \kappa_{y,z}^{\mathrm{pre}}(t)
  =
  c_{\mathrm{pre}}\bigl(|y|+t+1\bigr)
  +
  c_{\mathrm{dec}}
  \left|z\,\EOS\right|.
\]
This convention treats the realized thinking prefix as already available and re-executes it as part of the prefill, rather than charging for generating it
token by token. It is therefore less direct as an accounting of the original
online production of the thought. It nevertheless has an especially natural
token-cost interpretation: Section~\ref{sec:tokencost} shows that it is the
per-attempt cost arising when a fixed realized thought is replayed before fresh
attempts to reproduce the artifact.

We now state the notion of LLM-relative realized-thought Levin complexity: 
\begin{definition}[Realized-thought Levin complexity]\label{def:kt}
Fix a model $M$ and a token-equivalent time function $\kappa$. For a context
$y$, an artifact $z$, and a thinking route $H^y$, define
\begin{equation}\label{eq:ktdef}
\widetilde{Kt}^{\kappa}_M\bigl(z \mid y;\, H^y\bigr)
\;:=\;
\min_{t \in \mathbb N_0}
\Bigl\{
\widetilde K_M\bigl(z \mid y\, H^y_{\le t}\, \EOT\bigr)
\;+\; \log_2 \kappa(t)
\Bigr\} ,
\end{equation}
\end{definition}
In other words, given a fixed thinking route $H^y$, we consider exactly Levin's combination of description length and running time from the notion of $Kt$-complexity \cite{Levin73}: description length plus log running time,
minimized over time. 

Note that although the minimization ranges over all of \(\mathbb N_0\), it
suffices to consider \(t=0,\ldots,S\):
\(H^y_{\leq t}=H^y\) for every \(t\geq S\), so the conditional-complexity
term is constant over these choices, while
\(\kappa(t)\geq\kappa(S)\) by monotonicity; thus, no \(t>S\) can improve
upon \(t=S\), and the minimum can be computed by evaluating the \(S+1\)
truncations \(t=0,\ldots,S\).

A program-based companion replaces a-priori complexity by program length:
\[
Kt^{\kappa}_M\bigl(z \mid y;\, H^y\bigr)
\;:=\;
\min_{t \in \mathbb N_0}
\Bigl\{ K_M\bigl(z \mid y\, H^y_{\le t}\, \EOT\bigr) + \log_2 \kappa(t) \Bigr\}.
\]

\subsection{Probabilistic Levin complexity: a median over random realized thoughts}\label{sec:pkt}
So far, the complexity has been defined relative to a particular realized
thinking route. Viewing the thinking as the random tape of the program (or,
equivalently, representing it by the random bits that cause the LLM to
generate that route), we now define an LLM-relative notion of
\emph{probabilistic Levin--Kolmogorov complexity} in the spirit of
\cite{GKLO22}. Namely, we summarize over the model's thinking randomness by
taking a $\delta$-quantile of the resulting realized-thought $Kt$-complexity. The resulting notion is an LLM-relative, a-priori $Kt$ counterpart of probabilistic time-bounded Kolmogorov complexity.
For \(\delta\in(0,1]\) and a random variable \(X\) taking values in
\([0,\infty]\), write
\[
\med_{\delta}[X]
:=
\inf\{a\in[0,\infty]:\Pr[X\leq a]\geq\delta\}
\]
for its lower \(\delta\)-quantile.

\begin{definition}[LLM-relative probabilistic Levin complexity]
\label{def:pkt}
Fix a model \(M\), a token-equivalent time function \(\kappa\), and a level
\(\delta\in(0,1]\). For a context \(y\) and artifact \(z\), the
\emph{probabilistic (a-priori) Levin complexity} of \(z\) given \(y\) at level
\(\delta\) is
\begin{equation}\label{eq:pktdef}
\pKtt^{\kappa}_{M,\delta}\bigl(z\mid y\bigr)
:=
\med_{\delta}\Bigl[
\widetilde{Kt}^{\kappa}_M\bigl(z\mid y;\,H^y\bigr)
\Bigr],
\end{equation}
where the quantile is taken over the distribution of the thinking route
\(H^y\) induced by a random rollout of \(M\) in context \(y\).
The
program-based companion \(\pKt^{\kappa}_{M,\delta}(z\mid y)\) is defined
analogously instead using \(Kt^{\kappa}_M\).
\end{definition}
Whenever $\delta$ is clear from context, we suppress it from the notation; we
typically consider the median $\delta=1/2$. 

\subsection{The prompt value}\label{sec:value}
We define prompt value as  algorithmic mutual information with respect to
$\pKtt^{\kappa}_M$, exactly as $\widetilde{\Val}_M$ was with respect to
$\widetilde K_M$ in Section~\ref{sec:exact}.

\begin{definition}[Prompt value]\label{def:medval}
Fix a model $M$, a token-equivalent time function $\kappa$, a level
$\delta\in(0,1]$, a prompt $p\in\Sigma^*$, and an artifact $z\in\Sigma^*$. Whenever
$\pKtt^{\kappa}_{M,\delta}(z)$ and $\pKtt^{\kappa}_{M,\delta}(z \mid p)$ are
both finite, the \emph{prompt value} of the prompt $p$ for the artifact $z$
is
\begin{equation}\label{eq:valdef}
\widetilde{\Val}^{\kappa}_{M,\delta}(p; z) \;:=\;
\pKtt^{\kappa}_{M,\delta}(z) - \pKtt^{\kappa}_{M,\delta}(z \mid p) ,
\end{equation}
and $\Val^{\kappa}_{M,\delta}(p;z) := \pKt^{\kappa}_{M,\delta}(z)
- \pKt^{\kappa}_{M,\delta}(z \mid p)$ is its program-based companion. 
\end{definition}

A prompt is therefore credited both when it makes the artifact more probable
given the realized thinking and when it eliminates thinking that the unprompted
side must otherwise pay for; and the unprompted side can compensate for a
missing hint by thinking longer, at a price.

As before, the a-priori and program-based version differ by at most 2:
\begin{proposition}[Program semantics of $\pKtt$ and of the prompt value]
\label{prop:pkt-semantics}
Fix $M$, $\kappa$ and $\delta\in(0,1]$. For every context $y$ and artifact $z$, the
program-based and a-priori complexities are finite together, and whenever they are finite:
\[
0 \;\le\; \pKt^{\kappa}_{M,\delta}(z \mid y)
- \pKtt^{\kappa}_{M,\delta}(z \mid y) \;\le\; 2 .
\]
Consequently, whenever the two complexities of Definition~\ref{def:medval} are
finite,
\[
\bigl| \Val^{\kappa}_{M,\delta}(p;z)
- \widetilde{\Val}^{\kappa}_{M,\delta}(p;z) \bigr| \;\le\; 2 .
\]
\end{proposition}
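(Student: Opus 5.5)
The argument pushes the pointwise bound of Theorem~\ref{thm:semantics}(i) through the two operations that build $\pKt$ from $K_M$: the minimum over truncation times $t$, and then the lower $\delta$-quantile over the random route $H^y$. Both operations are monotone and commute with adding a constant, so a pointwise sandwich on the inner quantities carries over to the outer ones.

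\textbf{Step 1: fixed route.} Fix a realized route $H^y$ and write $c_t := y\,H^y_{\le t}\,\EOT$. For every $t$ with $P_M(z\mid c_t)>0$, Theorem~\ref{thm:semantics}(i) gives
\[
\widetilde K_M(z\mid c_t)\le K_M(z\mid c_t)<\widetilde K_M(z\mid c_t)+2 .
\]
If $P_M(z\mid c_t)=0$, then $I_{c_t}(z)$ is empty, no program forces $z$, and both quantities are $\infty$. So the finiteness patterns agree term by term. Adding $\log_2\kappa(t)$ and minimizing over $t$ preserves the inequalities, since $\min$ is monotone and $\min_t(a_t+2)=\min_t a_t+2$. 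The minimum is in effect over the finite set $t\le S$, so it is attained. Hence, pointwise in $H^y$,
\[
\widetilde{Kt}^{\kappa}_M(z\mid y;H^y)\le Kt^{\kappa}_M(z\mid y;H^y)\le \widetilde{Kt}^{\kappa}_M(z\mid y;H^y)+2 ,
\]
and the two are finite for exactly the same routes. One could even get a strict upper bound here, but only the weak one is needed.

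\textbf{Step 2: quantiles.} Let $X=\widetilde{Kt}^{\kappa}_M(z\mid y;H^y)$ and $Y=Kt^{\kappa}_M(z\mid y;H^y)$, so $X\le Y\le X+2$ almost surely, with values in $[0,\infty]$. The lower quantile $\med_\delta$ is monotone under almost-sure domination: if $X\le Y$, then $\{Y\le a\}\subseteq\{X\le a\}$, so the set of $a$ with $\Pr[Y\le a]\ge\delta$ is contained in the corresponding set for $X$. It is also translation-equivariant, $\med_\delta[X+2]=\med_\delta[X]+2$, by the substitution $a\mapsto a+2$ with $\infty+2=\infty$. Together these give
\[
\med_\delta[X]\le\med_\delta[Y]\le\med_\delta[X]+2 .
\]
This single sandwich yields both the claim that the two complexities are finite together and the bound $0\le\pKt-\pKtt\le 2$. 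Nonnegativity of the quantities is automatic: $\kappa\ge 1$ and probabilities are at most $1$.

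\textbf{Step 3: values.} Apply Step 2 with context $\eps$ and with context $p$. Each complexity then lies in an interval of length $2$ above its a-priori counterpart, and subtracting gives
\[
\bigl|\Val^{\kappa}_{M,\delta}(p;z)-\widetilde{\Val}^{\kappa}_{M,\delta}(p;z)\bigr|\le 2 .
\]

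The main obstacle is only bookkeeping at $\infty$ in the quantile step. I would handle it by working in the extended reals $[0,\infty]$ throughout, noting that $\{Y\le a\}\subseteq\{X\le a\}$ still holds there and that $\inf\emptyset=\infty$ is treated consistently on both sides.
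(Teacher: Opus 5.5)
Your proposal is correct and follows the paper's proof essentially step for step. You apply the pointwise sandwich from Theorem~\ref{thm:semantics}(i) at each truncation context $c_t$, add $\log_2\kappa(t)$, minimize over $t$, push the bounds through the lower $\delta$-quantile, and difference the results for the contexts $\eps$ and $p$. Your explicit handling of $P_M(z\mid c_t)=0$ and of the translation equivariance $\med_\delta[X+2]=\med_\delta[X]+2$ spells out two points the paper's proof uses only implicitly; its footnote proves just monotonicity of lower quantiles.
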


\begin{proof}
For each \(t\in\mathbb N_0\), let
\(c_t:=y\,H^y_{\leq t}\,\EOT\). By
Theorem~\ref{thm:semantics}(i), \(K_M(z\mid c_t)\) and
\(\widetilde K_M(z\mid c_t)\) are finite together and satisfy
\[
\widetilde K_M(z\mid c_t)
\leq K_M(z\mid c_t)
< \widetilde K_M(z\mid c_t)+2.
\]
Adding \(\log_2\kappa(t)\) preserves these inequalities pointwise in \(t\).
Since the relevant minima are attained, minimization preserves the strict
upper bound:
\[
\widetilde{Kt}^{\kappa}_M(z\mid y;H^y)
\leq Kt^{\kappa}_M(z\mid y;H^y)
< \widetilde{Kt}^{\kappa}_M(z\mid y;H^y)+2,
\]
with the two quantities finite together. Taking lower quantiles preserves the
weak inequalities, but may turn the strict upper bound into equality\footnote{More generally, if \(X(\omega)\leq Y(\omega)\) for every outcome
\(\omega\) (or merely almost surely), then
\(\Pr[X\leq a]\geq\Pr[Y\leq a]\) for every \(a\). Hence
\(\{a:\Pr[Y\leq a]\geq\delta\}\subseteq
\{a:\Pr[X\leq a]\geq\delta\}\), and taking infima gives
\(\med_\delta[X]\leq\med_\delta[Y]\).}; hence
\[
\pKtt^{\kappa}_{M,\delta}(z\mid y)
\leq \pKt^{\kappa}_{M,\delta}(z\mid y)
\leq \pKtt^{\kappa}_{M,\delta}(z\mid y)+2,
\]
again with the two quantities finite together. Finally,
\[
\Val^\kappa_{M,\delta}(p;z)
-\widetilde{\Val}^\kappa_{M,\delta}(p;z)
=
\bigl(
\pKt^\kappa_{M,\delta}(z)
-\pKtt^\kappa_{M,\delta}(z)
\bigr)
-
\bigl(
\pKt^\kappa_{M,\delta}(z\mid p)
-\pKtt^\kappa_{M,\delta}(z\mid p)
\bigr).
\]
Each parenthesized term lies in \([0,2]\), so their difference has absolute
value at most \(2\).
\end{proof}
\subsection{Canonical targets and verified acceptance}\label{sec:canonical}
The exact-string requirement on $z$ is a modelling choice about what constitutes
the ``artifact'' $z$ relative to which we are measuring the prompt's value. If
one wishes to price a \emph{class of acceptable artifacts} rather than a string---``the
route contains a proof the verifier accepts''---one declares a machine that
thinks and then emits a canonical verdict, and takes $z$ to be that verdict.
Concretely, let
\(\mathcal V:\Sigma^*\to\{0,1\}\) be an efficiently computable predicate on
the transcript preceding \(\EOT\). Extend the declared machine with a verdict
step that, after the realized transcript \(yH\), emits \(\ACC\) if
\(\mathcal V(yH)=1\) and \(\REJ\) otherwise, and then emits \(\EOS\).

Under this type of ``verified artifacts", the measure takes a particularly simple form. Since the
verdict step is deterministic, $P_M(\ACC \mid y\,H_{\le t}\,\EOT)$ is $1$ on
accepting prefixes and $0$ on rejecting ones, and hence
$\widetilde K_M(\ACC \mid y\,H_{\le t}\,\EOT) \in \{0, \infty\}$. Define the
``acceptance cost" of a route $H$ by
\[
C_y(H):=\min\{\kappa(t):t\in\mathbb N_0,\ \mathcal V(yH_{\le t})=1\},
\qquad \min\emptyset:=\infty.
\]
It follows that
$\widetilde{Kt}^{\kappa}_M(\ACC \mid y;H^y)=\log_2 C_y(H^y)$. Writing $\tau_y$
for the median of $C_y(H^y)$ over rollouts in context $y$, we thus get
\begin{equation}
\pKtt^{\kappa}_M(\ACC \mid y) = \log_2 \tau_y ,
\qquad
\widetilde{\Val}^{\kappa}_M(p; \ACC) = \log_2 \frac{\tau_\eps}{\tau_p} ,
\end{equation}
whenever the two medians are finite. 

\section{Estimation}\label{sec:protocol}
We observe that \(\pKtt\) is efficiently estimable in the following sense:
the empirical median obtained from polynomially many rollouts lies, with high
probability, between quantiles arbitrarily close to the population median.
Applying this guarantee with and without the prompt yields corresponding bounds
on the prompt-value estimate.

\begin{center}
\fbox{%
\begin{minipage}{%
  \subonly{0.94\columnwidth}%
  \fullonly{0.94\textwidth}%
}
\subonly{\small}
\textbf{Protocol.}
Fix a model $M$, a token-equivalent time function $\kappa$, a level
$\delta\in(0,1]$, and a number $k\in\mathbb N$ of rollouts per side. 
{\bf Input}: artifact $z$ and prompt $p$.

\begin{enumerate}
\setlength{\itemsep}{2pt}
\setlength{\parskip}{0pt}
\setlength{\parsep}{0pt}

\item For each context \(y\in\{p,\eps\}\), perform \(k\) independent complete
thinking rollouts, producing routes
\(H^{y,(1)},\ldots,H^{y,(k)}\) with respective lengths
\(S^{y,(1)},\ldots,S^{y,(k)}\). For each route \(i\), evaluate every
truncation \(t=0,\ldots,S^{y,(i)}\) and form its routewise minimum
\(\widetilde{Kt}^{\,y,(i)}\) according to \eqref{eq:ktdef}.

\item For each \(y\in\{\eps,p\}\), let \(\widehat m_k^{\,y}\) be the empirical
lower \(\delta\)-quantile\footnote{That is, the
\(\lceil\delta k\rceil\)-th smallest of the \(k\) observed values.} of
\(\widetilde{Kt}^{\,y,(1)},\ldots,\widetilde{Kt}^{\,y,(k)}\), and output
\[
\widehat{\Val}^{\kappa}_{M,\delta,k}(p;z)
:=
\widehat m_k^{\,\eps}-\widehat m_k^{\,p}.
\]
\end{enumerate}
\end{minipage}%
}
\end{center}

\noindent
For \(\delta\in(0,1)\) and
\(|\zeta|<\min\{\delta,1-\delta\}\), define the
\(\zeta\)-offset prompt value by
\[
\widetilde{\Val}^{\kappa}_{M,\delta;\zeta}(p;z)
:=
\pKtt^\kappa_{M,\delta+\zeta}(z)
-
\pKtt^\kappa_{M,\delta-\zeta}(z\mid p).
\]
Thus
\(\widetilde{\Val}^{\kappa}_{M,\delta;0}
=\widetilde{\Val}^{\kappa}_{M,\delta}\), and
\(\widetilde{\Val}^{\kappa}_{M,\delta;\zeta}\) is nondecreasing in \(\zeta\).

\begin{theorem}[Efficient estimation of \(\pKtt\) and prompt value]\label{thm:estimate}
Run the preceding protocol with \(k\in\mathbb N\) and
\(\delta\in(0,1)\). For every
\(\zeta\in(0,\min\{\delta,1-\delta\})\), with probability at least
\(1-4\exp(-2k\zeta^2)\), simultaneously for \(y\in\{p,\eps\}\),
\[
\widehat m_k^{\,y}
\in
\left[
\pKtt^\kappa_{M,\delta-\zeta}(z\mid y),
\pKtt^\kappa_{M,\delta+\zeta}(z\mid y)
\right].
\]
Consequently, with the same probability and whenever the interval is defined,
\[
\widehat{\Val}^{\kappa}_{M,\delta,k}(p;z)
\in
\left[
\widetilde{\Val}^{\kappa}_{M,\delta;-\zeta}(p;z),
\widetilde{\Val}^{\kappa}_{M,\delta;\zeta}(p;z)
\right].
\]
If next-token sampling from \(M\), evaluation of the artifact probabilities,
and evaluation and comparison of \(\kappa_{y,z}(t)\) can be performed in time
polynomial in their input lengths, then the protocol runs in time polynomial
in \(k\), \(T\), \(|p|\), and \(|z|\), where \(T\) is the maximum
thinking time attained by \(M\) in the protocol on inputs \(p\) and \(\eps\).
\end{theorem}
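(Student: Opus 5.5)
The plan is the standard order-statistic argument via the Dvoretzky--Kiefer--Wolfowitz inequality, or more simply two one-sided Hoeffding bounds per side, followed by a union bound over the two contexts and the two tails. Fix a context \(y\in\{p,\eps\}\). Let \(X_1,\ldots,X_k\) be the i.i.d.\ routewise minima \(\widetilde{Kt}^{\,y,(i)}\), with values in \([0,\infty]\) and common distribution function \(F(a)=\Pr[X\le a]\). Write \(q_-:=\med_{\delta-\zeta}[X]\) and \(q_+:=\med_{\delta+\zeta}[X]\). Let \(r:=\lceil\delta k\rceil\), so that \(\widehat m_k^{\,y}=X_{(r)}\). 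The estimation claim has two parts: \(X_{(r)}<q_-\) has small probability, and \(X_{(r)}>q_+\) has small probability.

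For the upper tail, right-continuity of \(F\) gives \(F(q_+)\ge\delta+\zeta\); this holds even at \(\infty\), since the infimum in the definition of \(\med_\delta\) is attained. The event \(X_{(r)}>q_+\) means that fewer than \(r\) samples satisfy \(X_i\le q_+\). The count of such samples is Binomial with mean at least \((\delta+\zeta)k\). Having fewer than \(\lceil \delta k\rceil\) successes forces at most \(\delta k\) of them, a deviation of at least \(\zeta k\) below the mean, so Hoeffding bounds this by \(\exp(-2k\zeta^2)\).

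For the lower tail, take any \(a<q_-\). Then \(F(a)<\delta-\zeta\) by the definition of the lower quantile. Taking the supremum over \(a<q_-\) gives \(\Pr[X<q_-]\le\delta-\zeta\); this left-limit is the step I expect to need care with atoms. The event \(X_{(r)}<q_-\) means that at least \(r\ge\delta k\) samples fall strictly below \(q_-\). The count of such samples is Binomial with mean at most \((\delta-\zeta)k\), so Hoeffding again gives probability at most \(\exp(-2k\zeta^2)\).

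A union bound over the two tails and the two contexts yields \(1-4\exp(-2k\zeta^2)\). The value bracket then follows by subtraction on this event. The upper bound is \(\widehat m^{\eps}_k-\widehat m^p_k\le \pKtt_{\delta+\zeta}(z)-\pKtt_{\delta-\zeta}(z\mid p)=\widetilde{\Val}_{\delta;\zeta}\). The lower bound symmetrically gives \(\widetilde{\Val}_{\delta;-\zeta}\). Both hold whenever these quantities are finite.

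The running time is straightforward bookkeeping. There are \(2k\) rollouts, each of length at most \(T\), and sampling each takes polynomial time. For each route we evaluate \(S+1\le T+1\) truncations; by the remark after Definition~\ref{def:kt}, these suffice. Each truncation costs \(|z|+1\) next-token probability evaluations on contexts of length \(O(|p|+T+|z|)\) and one evaluation of \(\kappa\). Taking minima and selecting the \(\lceil\delta k\rceil\)-th order statistic is polynomial in \(k\), and comparisons of \(\widetilde K+\log_2\kappa\) can be done as comparisons of products \(\kappa(t)/P\). The main obstacle is the quantile bookkeeping with atoms and infinite values; the concentration itself is routine.
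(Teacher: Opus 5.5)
Your proposal is correct and takes essentially the same route as the paper. You use the two quantile facts $\Pr[X<q_-]\le\delta-\zeta$ and $\Pr[X\le q_+]\ge\delta+\zeta$, a one-sided additive Chernoff--Hoeffding bound for each tail of the $\lceil\delta k\rceil$-th order statistic, a union bound over two tails and two contexts, subtraction for the value bracket, and routine running-time bookkeeping. Your justification of the quantile facts via right-continuity and left limits, including the value $\infty$, just spells out what the paper records in a footnote.
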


\begin{proof}
Fix \(y\in\{p,\eps\}\), let
\(X_y:=\widetilde{Kt}^{\kappa}_M(z\mid y;H^y)\), and set
\[
q_-:=\pKtt^\kappa_{M,\delta-\zeta}(z\mid y),
\qquad
q_+:=\pKtt^\kappa_{M,\delta+\zeta}(z\mid y).
\]
The \(k\) routewise values used to compute \(\widehat m_k^{\,y}\) are
independent copies of \(X_y\). By the definition of a lower quantile,\footnote{Recall, for
\(q_\alpha:=\inf\{a:\Pr[X\leq a]\geq\alpha\}\), one has
\(\Pr[X<q_\alpha]\leq\alpha\leq\Pr[X\leq q_\alpha]\).}
\[
\Pr[X_y<q_-]\leq\delta-\zeta,
\qquad
\Pr[X_y\leq q_+]\geq\delta+\zeta.
\]
If \(\widehat m_k^{\,y}<q_-\), then at least
\(\lceil\delta k\rceil\) samples are strictly below \(q_-\), so by the Chernoff bound\footnote{Recall, for independent Bernoulli random variables
\(B_1,\ldots,B_k\), the additive Chernoff bound states
\(\Pr[\sum_i B_i-\E[\sum_i B_i]\geq\zeta k]\leq\exp(-2k\zeta^2)\) and
\(\Pr[\E[\sum_i B_i]-\sum_i B_i\geq\zeta k]\leq
\exp(-2k\zeta^2)\)~\cite{Chernoff52}.}
this event is bounded by \(\exp(-2k\zeta^2)\). Similarly, if
\(\widehat m_k^{\,y}>q_+\), then fewer than
\(\lceil\delta k\rceil\) samples are at most \(q_+\), and Chernoff again bounds this event by \(\exp(-2k\zeta^2)\). Thus the
containment fails for a given \(y\) with probability at most
\(2\exp(-2k\zeta^2)\). A union bound over \(y\in\{p,\eps\}\) gives the stated
probability.

On this simultaneous event, subtracting the interval for \(y=p\) from the
interval for \(y=\eps\) gives
\[
\widehat m_k^{\,\eps}-\widehat m_k^{\,p}
\in
\left[
\pKtt^\kappa_{M,\delta-\zeta}(z)
-\pKtt^\kappa_{M,\delta+\zeta}(z\mid p),
\;
\pKtt^\kappa_{M,\delta+\zeta}(z)
-\pKtt^\kappa_{M,\delta-\zeta}(z\mid p)
\right].
\]
By the definitions of
\(\widehat{\Val}^{\kappa}_{M,\delta,k}\) and
\(\widetilde{\Val}^{\kappa}_{M,\delta;\zeta}\), this is exactly the claimed
prompt-value interval.

For the running time, every sampled thinking route has at most \(T+1\) relevant
truncations. The protocol evaluates the artifact probability and
\(\kappa_{y,z}(t)\) at each truncation and minimizes over them. Suppressing the
fixed background context, every evaluation processes at most
\(|p|+T+1+|z\,\EOS|\) tokens. Under the stated assumptions, these evaluations,
the minimizations, and the two empirical-quantile computations take time
polynomial in the claimed parameters.
\end{proof}

\begin{remark}[Transformer implementation]
For a transformer with an append-only KV cache, the cache for every earlier
thinking prefix is an initial segment of the cache for the complete route, so
that prefix need not be recomputed.
Suppressing the fixed background context, the protocol can
therefore be implemented using
\[
O\!\left(k|p|
+k(T+1)\bigl(1+|z|\bigr)\right)
\]
model-token evaluations.
\end{remark}

\section{Prompt value and token-equivalent reproduction cost}\label{sec:tokencost}

We here consider an economic interpretation of our notions. We first define
the cost of reproducing an artifact through an experiment that, given a
\emph{fixed} thinking route, repeatedly prepares the output stage from that
route and samples it with fresh randomness until it recovers the artifact.
Each attempt is assigned the token-equivalent charge specified by the declared
cost function \(\kappa\).

We then show that $2^{\pKtt^{\kappa}_M(z \mid y)}$ is exactly the
\emph{typical} such cost in context \(y\), where typicality is taken over the distribution of thinking
routes induced by random rollouts, so that exponentiated prompt value is the
ratio of the typical costs without and with the prompt.

\begin{definition}[Reproduction experiment and token cost]\label{def:tc}
Fix a model \(M\) and a token-equivalent time function \(\kappa\). Let \(y\)
be a context, \(z\) an artifact, and \(H\) a thinking string. Write
\(g:=G_y(z\mid H)\). The
\emph{reproduction experiment} for $z$ given $H$
makes independent, identically distributed attempts. Each attempt is charged
$\kappa(|H|)$ token-equivalent units, and a fresh output stage is run in the
context $y\,H\,\EOT$. An attempt
\emph{succeeds} if its output is exactly $z$, which occurs with probability
$g$. The experiment halts at the first success, at index $N$, and the
\emph{token-equivalent reproduction cost} of $z$ given $H$ is its expected
total expenditure
$\TC_y(z; H) := \E[N\kappa(|H|)]$.
For a thinking route $H$ with
truncations $H_{\le t}$, the
\emph{best-prefix reproduction cost} is
$\TC^*_y(z; H) := \min_{0\le t\le |H|} \TC_y(z; H_{\le t})$.
\end{definition}

Thus \(\TC_y(z;H)\) is the expected expenditure conditional on the fixed
route \(H\); the quantiles below are taken over routes sampled from random rollouts.

\begin{theorem}[Reproduction cost]\label{thm:bill}
Fix $M$ and $\kappa$. For every context $y$, artifact $z$, and thinking string
$H$,
$$\displaystyle \TC_y(z; H) = \frac{\kappa(|H|)}{G_y(z \mid H)},$$ with the
right-hand side interpreted as $+\infty$ when $G_y(z\mid H)=0$.
\end{theorem}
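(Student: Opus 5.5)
The plan is to reduce the statement to the expectation of a geometric random variable. Fix $y$, $z$, $H$, and write $g:=G_y(z\mid H)$ and $c:=\kappa(|H|)$. By Definition~\ref{def:tc}, the attempts are i.i.d. and each succeeds with probability exactly $g$; this is because each attempt runs a fresh output stage in context $y\,H\,\EOT$, and the probability that this stage emits exactly $z$ followed by $\EOS$ is $P_M(z\mid y\,H\,\EOT)=g$ by \eqref{eq:seqprob} and Definition~\ref{def:thinking}. Every attempt is charged the same constant $c$, which does not depend on the attempt's outcome. So the total expenditure is $Nc$, where $N$ is the index of the first success, and linearity gives $\TC_y(z;H)=c\,\E[N]$.

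I will split into two cases.

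\emph{Case $g>0$.} Here $N$ is geometric with parameter $g$: independence gives $\Pr[N=n]=(1-g)^{n-1}g$ for $n\ge1$. I will compute its mean by the tail-sum formula,
\[
\E[N]=\sum_{n\ge1}\Pr[N\ge n]=\sum_{n\ge1}(1-g)^{n-1}=\frac{1}{g},
\]
which is valid because $0<g\le1$ makes the geometric series converge. Multiplying by $c$ gives $\TC_y(z;H)=\kappa(|H|)/G_y(z\mid H)$.

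\emph{Case $g=0$.} No attempt ever succeeds, so $N=\infty$ almost surely. Since $c\ge1>0$ by Definition~\ref{def:kappa}, the expenditure $Nc$ is $+\infty$ almost surely, and so is its expectation. This matches the stated convention for the right-hand side.

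I expect no real obstacle here. The only points that need care are the formal ones: $N$ may take the value $\infty$ (with probability zero when $g>0$, and with probability one when $g=0$), and $\kappa\ge1$ keeps the product $\infty\cdot c$ well defined. Both are covered by the case split above.
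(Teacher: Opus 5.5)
Your proposal is correct and follows the paper's proof essentially verbatim: set $g=G_y(z\mid H)$, note that $N$ is geometric with parameter $g$ so $\E[N]=1/g$, and pull the constant per-attempt charge $\kappa(|H|)$ out of the expectation. The only differences are minor. In the $g=0$ case the paper simply declares both sides infinite, while you derive $\TC_y(z;H)=+\infty$ from $N=\infty$ almost surely together with $\kappa\ge 1$. You also verify $\E[N]=1/g$ via the tail-sum formula rather than quoting it.
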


\begin{proof}
Fix \(H\) and write \(g:=G_y(z\mid H)\). If \(g=0\), both
\(\TC_y(z;H)\) and \(\kappa(|H|)/g\) are defined to be infinite, so assume
\(g>0\). Because each attempt uses fresh output-stage randomness while keeping
\(H\) fixed, the attempts are independent and each succeeds with probability
\(g\). Thus \(N\) is geometric with parameter \(g\), and
\(\E[N]=1/g\). Since each attempt incurs the charge \(\kappa(|H|)\), the total
charge is \(N\kappa(|H|)\). Therefore
\[
\TC_y(z;H)
=
\E\!\left[N\kappa(|H|)\right]
=
\kappa(|H|)\E[N]
=
\frac{\kappa(|H|)}{g}.
\]
\end{proof}

\begin{proposition}[Realized-thought complexity as reproduction cost]
\label{prop:kttc}
Fix $M$ and $\kappa$. For every context $y$, artifact $z$, and rollout $H^y$,
\begin{equation}\label{eq:kttc}
2^{\,\widetilde{Kt}^{\kappa}_M(z \mid y;\, H^y)} \;=\; \TC^*_y(z; H^y).
\end{equation}
\end{proposition}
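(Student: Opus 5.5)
The plan is to unfold both sides into a minimum over the same finite index set, and then use that $x\mapsto 2^x$ is strictly increasing, so it commutes with minima. Write $S:=|H^y|$ and, for $t\in\mathbb N_0$, set $g_t:=G_y(z\mid H^y_{\le t})=P_M(z\mid y\,H^y_{\le t}\,\EOT)$.

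\emph{Right-hand side.} Apply Theorem~\ref{thm:bill} to each truncation $H^y_{\le t}$ with $0\le t\le S$. Since $|H^y_{\le t}|=t$ in this range, this gives $\TC_y(z;H^y_{\le t})=\kappa(t)/g_t$, read as $+\infty$ when $g_t=0$. Hence
\[
\TC^*_y(z;H^y)=\min_{0\le t\le S}\frac{\kappa(t)}{g_t}.
\]

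\emph{Left-hand side.} By Definition~\ref{def:kt}, $\widetilde{Kt}^{\kappa}_M(z\mid y;H^y)$ is a minimum over $t\in\mathbb N_0$. I will invoke the observation after that definition: for $t\ge S$ the conditioning context $y\,H^y_{\le t}\,\EOT$ is the same, and $\kappa(t)\ge\kappa(S)$ by monotonicity. So the minimum may be restricted to $0\le t\le S$ and is attained there. For each such $t$, Definition~\ref{def:measured} gives
\[
\widetilde K_M(z\mid y\,H^y_{\le t}\,\EOT)+\log_2\kappa(t)=\log_2\frac{\kappa(t)}{g_t},
\]
with both sides equal to $+\infty$ when $g_t=0$. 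This uses $\kappa(t)\in[1,\infty)$, so $\log_2\kappa(t)$ is finite and nonnegative.

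\emph{Matching the two sides.} Use the convention $2^{+\infty}=+\infty$. Since $2^x$ is strictly increasing on $[0,\infty]$, it commutes with a minimum over a finite set:
\[
2^{\min_t a_t}=\min_t 2^{a_t}.
\]
Applying this with $a_t=\log_2(\kappa(t)/g_t)$ turns the left-hand side into $\min_{0\le t\le S}\kappa(t)/g_t$, which equals $\TC^*_y(z;H^y)$. If every $g_t=0$, both sides are $+\infty$, and the identity still holds.

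The only real care point is bookkeeping rather than any inequality. The minimization in $\widetilde{Kt}$ ranges over all of $\mathbb N_0$, whereas $\TC^*$ ranges only over $0\le t\le|H|$; the truncation argument above bridges this. In addition, the cost in $\TC$ is $\kappa(|H_{\le t}|)$ rather than $\kappa(t)$, and these agree exactly when $t\le S$, which is the only range used. The infinite cases ($g_t=0$) must also be handled consistently under the stated conventions.
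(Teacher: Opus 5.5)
Your proposal is correct and follows the paper's proof: apply Theorem~\ref{thm:bill} to each truncation, restrict the minimum to $0\le t\le S$, and commute the increasing map $x\mapsto 2^x$ with the finite minimum. Your extra bookkeeping is a welcome tightening of steps the paper leaves implicit: the reduction from $\mathbb N_0$ to $t\le S$, the identity $|H^y_{\le t}|=t$ in that range, and the $g_t=0$ cases.
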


\begin{proof}
Let $S=|H^y|$. Comparing Definition~\ref{def:kt} with
Theorem~\ref{thm:bill}, and using that exponentiation is increasing, gives
\begin{align*}
2^{\,\widetilde{Kt}^{\kappa}_M(z\mid y;H^y)}
&=
\min_{0\le t\le S}
2^{\,-\log_2G_y(z\mid H^y_{\le t})+\log_2\kappa(t)}\\
&=
\min_{0\le t\le S}
\frac{\kappa(t)}{G_y(z\mid H^y_{\le t})}\\
&=
\min_{0\le t\le S}\TC_y(z;H^y_{\le t})
=\TC_y^*(z;H^y).
\end{align*}
\end{proof}

\begin{theorem}[$\pKtt$ and prompt value as typical token costs]
\label{thm:pktcost}
Fix \(M\), \(\kappa\), and \(\delta\in(0,1]\). For every context \(y\) and
artifact \(z\),
\[
2^{\,\pKtt^{\kappa}_{M,\delta}(z \mid y)}
\;=\;
\med_{\delta}\bigl[\,\TC^*_y(z;H^y)\,\bigr],
\]
where \(\med_{\delta}\) is taken over the distribution of the thinking route
\(H^y\) generated by a random rollout of \(M\) in context \(y\).
Consequently, every prompt $p$ and artifact $z$ such that both complexities in
Definition~\ref{def:medval} are finite,
\[
2^{\,\widetilde{\Val}^{\kappa}_{M,\delta}(p;z)}
\;=\;
\frac{\med_{\delta}\bigl[\TC^*_\eps(z; H^\eps)\bigr]}
     {\med_{\delta}\bigl[\TC^*_p(z; H^p)\bigr]} .
\]
\end{theorem}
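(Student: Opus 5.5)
The plan is to combine Proposition~\ref{prop:kttc} with the fact that lower quantiles commute with strictly increasing maps. Let \(X:=\widetilde{Kt}^{\kappa}_M(z\mid y;H^y)\), a random variable with values in \([0,\infty]\). It is nonnegative because \(\widetilde K_M\ge 0\) and \(\kappa\ge 1\). Let \(Y:=\TC^*_y(z;H^y)\). Proposition~\ref{prop:kttc} gives \(Y=2^X\) pointwise, for every realized route, with the convention \(2^\infty=\infty\). It therefore suffices to show that \(\med_\delta[2^X]=2^{\med_\delta[X]}\) for every \(\delta\in(0,1]\).

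For this, let \(\phi(a):=2^a\). It is a strictly increasing bijection from \([0,\infty]\) onto \([1,\infty]\). For every \(a\in[0,\infty]\) the events \(\{X\le a\}\) and \(\{Y\le \phi(a)\}\) coincide. Hence
\[
\{b:\Pr[Y\le b]\ge\delta\}\cap[1,\infty]=\phi\bigl(\{a:\Pr[X\le a]\ge\delta\}\bigr).
\]
Since \(Y\ge 1\), every \(b<1\) has \(\Pr[Y\le b]=0<\delta\). So the infimum defining \(\med_\delta[Y]\) is taken over the right-hand set only. Because \(\phi\) is continuous and increasing on the extended half-line, it maps infima to infima. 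This gives \(\med_\delta[Y]=\phi(\med_\delta[X])\), which is the first identity.

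The one point needing care is this infimum step. I will argue it directly. The set \(A:=\{a:\Pr[X\le a]\ge\delta\}\) is an up-set. It is also closed from the right by right-continuity of the CDF, so it equals \([q,\infty]\) with \(q=\med_\delta[X]\); it is nonempty since \(\Pr[X\le\infty]=1\). Its image is then \([2^q,\infty]\), whose infimum is \(2^q\). This also covers the case \(q=\infty\).

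For the prompt-value identity, assume both complexities are finite and apply the first part with \(y=\eps\) and \(y=p\). Then
\[
2^{\widetilde{\Val}}=2^{\pKtt(z)}/2^{\pKtt(z\mid p)}
\]
is the ratio of the two medians of best-prefix costs. Both medians are finite and at least \(1\), so the ratio is well defined. I do not expect a real obstacle; the only subtle step is the quantile-commutation with the \(\infty\) conventions, which the interval description above settles.
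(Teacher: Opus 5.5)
Your proposal is correct and follows the paper's proof: combine Proposition~\ref{prop:kttc} with the fact that lower quantiles commute with the continuous, strictly increasing map \(x\mapsto 2^x\), then exponentiate Definition~\ref{def:medval} to obtain the ratio. The only difference is that you prove the commutation step yourself, via the description \(\{a:\Pr[X\le a]\ge\delta\}=[q,\infty]\) including the \(q=\infty\) case, whereas the paper simply asserts it in a footnote.
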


\begin{proof}
Lower quantiles commute with continuous strictly increasing
transformations.\footnote{That is,
$\med_\delta[\varphi(X)]=\varphi(\med_\delta[X])$ for continuous strictly
increasing $\varphi$, with the same identity on the extended real line when
$\varphi(+\infty)=+\infty$.}
Applying this fact to \(\varphi(x)=2^x\) and using
Proposition~\ref{prop:kttc} gives
\[
\begin{aligned}
2^{\,\pKtt^{\kappa}_{M,\delta}(z\mid y)}
&=
2^{\,\med_\delta\!\left[
\widetilde{Kt}^{\kappa}_M(z\mid y;H^y)
\right]}\\
&=
\med_\delta\!\left[
2^{\,\widetilde{Kt}^{\kappa}_M(z\mid y;H^y)}
\right]\\
&=
\med_\delta\bigl[\TC_y^*(z;H^y)\bigr].
\end{aligned}
\]
Exponentiating Definition~\ref{def:medval} and applying this identity to its
two terms gives the ratio.
\end{proof}

The theorem gives prompt value its token-cost meaning: a value of
$\widetilde{\Val}^{\kappa}_M(p;z) = b$ means that reproducing this artifact
without the prompt typically costs $2^{b}$ times more token-equivalent units
than with it. 

\paragraph{On the cost function and more general notions of $\widetilde{Kt}$}
The two cost conventions introduced above admit the following operational
interpretations. Under the
``prefix-prefill" cost convention,
\(\kappa^{\mathrm{pre}}\), the realized thought is treated as already
available: each attempt freshly prefills \(y\,H\,\EOT\) and then runs the
output stage with fresh randomness. This convention is natural when the thought
is already available, and is a good amortized approximation when many
reproduction attempts are expected. Under the ``generated-thought" cost
convention, \(\kappa^{\mathrm{gen}}\), each
attempt instead reruns the thinking stage with the same thinking-stage
randomness, thereby regenerating the same route \(H\) sequentially before
running the output stage with fresh randomness. This convention is natural
when each attempt must reproduce the complete online computation, and is closer
to the total cost when only a few attempts are expected and the cost of
initially generating the thought cannot be amortized.

Both cost conventions are stylized approximations of the
implementation-dependent cost of running an LLM. More generally, Levin's
objective function is only one particular way of combining success probability and
computation. Let
\[
  F:[0,1]\times[1,\infty)\longrightarrow\overline{\mathbb R},
  \qquad F(0,c)=+\infty
\]
be any declared function that is nonincreasing in its first argument and
nondecreasing in its second, and define
\[
  \widetilde{Kt}^{F,\kappa}_M(z\mid y;H^y)
  :=
  \min_{t\in\mathbb N_0}
  F\!\left(
    G_y(z\mid H^y_{\leq t}),
    \kappa(t)
  \right).
\]
The usual Levin objective corresponds to
\[
  F_{\mathrm{Lev}}(g,c)
  =
  -\log_2 g+\log_2 c
  =
  \log_2\frac{c}{g}.
\]
Other choices of \(F\) can encode more detailed production costs, including
different charges for initially generating a thought, replaying it, retaining
a KV cache, or making subsequent output attempts. In particular, if
\(T(g,c)\) denotes the expected token cost of an explicitly specified
production procedure with success probability \(g\) and computational charge
\(c\), one may take \(F(g,c)=\log_2T(g,c)\).\footnote{For example, suppose the
first attempt sequentially generates the fixed thinking route and costs
\(\kappa^{\mathrm{gen}}(t)\), while each subsequent attempt freshly prefills
that route and costs \(\kappa^{\mathrm{pre}}(t)\). If \(N\) is geometric with
success probability \(g\), the expected total cost is
\[
\E\!\left[
\kappa^{\mathrm{gen}}(t)+(N-1)\kappa^{\mathrm{pre}}(t)
\right]
=
\kappa^{\mathrm{gen}}(t)
+\left(\frac1g-1\right)\kappa^{\mathrm{pre}}(t).
\]
Taking the logarithm of this expression gives the corresponding choice of
\(F\).}
We focus on the Levin choice
because it is simple and yields the exact reproduction-cost
interpretation above; richer cost aggregators can be used when a more detailed implementation-specific accounting is desired.

\section{An Experimental Illustration}\label{sec:experiment}

We illustrate the measure on twelve problems from GSM8K, a dataset of
grade-school mathematics word problems with step-by-step natural-language
reference solutions~\cite{GSM8K}.
The experiment illustrates, on
a small scale, why thinking and computation must be incorporated rather than
using only the probability-based, non-thinking prompt value of
Section~\ref{sec:exact}. It also illustrates why prompt value is naturally
indexed by a quantile: the same prompt may help one part of the rollout
distribution while hurting another. 
(The experiment is intended as an
illustration of the measure, not as a population-level evaluation of prompting
on GSM8K.)

\paragraph{Problems and prompts.}
We sample \(100\) problems without replacement from the
GSM8K training split and retain the first twelve whose reference solutions
contain at least three newline-delimited steps. For each problem \(q\), the prompt \(p\) is the first reference step, with
GSM8K calculator annotations such as \texttt{<<2*300=600>>} removed. For
example, the first selected problem, \texttt{gsm8k\#4205}, is:
\begin{quote}
\emph{While at the lake, Cohen saw 300 fish-eater birds that had migrated into
the area recently walking by the lake, eating the fish they had caught. The
number of birds at the lake doubled on the second day and reduced by 200 on the
third day. How many fish-eater birds did Cohen see in the three days?}
\end{quote}
The canonical answer is \(1300\), and the first reference step is:
\begin{quote}
\emph{Since there were 300 fish-eater birds in the lake on the first day, the
number doubled to \(2*300=600\) birds on the second day.}
\end{quote}

The accompanying online notebook\footnote{\url{https://www.kaggle.com/code/rafaelpass/the-value-of-a-prompt}}
provides the complete code needed to reproduce the problem selection, sampled
routes, scoring output, and figures.

\paragraph{Protocol.}
The underlying model is
\texttt{deepseek-ai/DeepSeek-R1-Distill-Qwen-1.5B}~\cite{DeepSeekR1}. We load
the model in FP16, a standard GPU-inference configuration, and convert its logits to FP32 before computing probabilities. Following DeepSeek's benchmark
configuration, we use temperature \(0.6\) and draw \(64\) independent rollouts
for every problem in each of two conditions. DeepSeek's configuration uses
top-\(p=0.95\), which discards low-probability tokens outside the sampling
nucleus and renormalizes the remaining distribution. 
We instead set top-\(p=1\), so that no such truncation occurs and every token
remains in the support of the sampling distribution.

Both conditions (prompted and unprompted) contain the same GSM8K question and answer instruction:
``Give the final answer as a single number on the last line, in the form
\texttt{ANSWER: <number>}.'' 
After the question, the model context ends
with \texttt{<think>}, the model's start-of-thinking marker. In the condition
\(y=\eps\), we sample the model's thinking immediately. In the condition
\(y=p\), we first append the reference step after \texttt{<think>} and then
sample the model's continuation. Thus the reference step is supplied as partial
computation.\footnote{In a preliminary comparison, supplying the same reference step in
the ``user turn" had much less effect on the model's success. In that placement,
the model may treat the step as an assertion to verify or re-derive. We instead
prefill it inside the open thinking block so that the model can continue from
it as already supplied partial computation, which is the intervention we wish
to value.}

We next sample a thinking route \(H\), stopping when the model emits
\(\EOT\) or \(\EOS\), or when an experimental horizon is reached.\footnote{In our experiment set-up, the experimental horizon is set to \(1800\) thinking tokens; all sampled rollouts terminate before reaching it.} For each
possible stopping time \(t\), we retain \(H_{\leq t}\), append \(\EOT\), and
supply the fixed field \texttt{ANSWER:}. The artifact \(z\) is the continuation
consisting of a leading space followed by the canonical gold numeral; we
evaluate the probability of emitting \(z\) and then \(\EOS\). (The fixed
\texttt{ANSWER:} field is excluded
from the thinking token count going into cost \(\kappa\).)

\paragraph{Cost conventions and the estimator.}
We evaluate both token-equivalent cost conventions introduced after
Definition~\ref{def:kappa}. Recall that
\(\kappa^{\mathrm{gen}}\) charges the supplied context at the prefill rate and
the thinking prefix and artifact at the sequential-decoding rate,
whereas \(\kappa^{\mathrm{pre}}\) charges the supplied context and thinking
prefix at the prefill rate and only the artifact at the
sequential-decoding rate. 
We normalize \(c_{\mathrm{pre}}=1\), use
\(c_{\mathrm{dec}}=32\) as the default, and examine sensitivity over a sampled grid with \(c_{\mathrm{dec}}\in[8,256]\).

For every rollout \(H^y\), we evaluate every truncation index
\(t=0,\ldots,|H^y|\) and first minimize the objective defining
\(\widetilde{Kt}^{\kappa}_M(z\mid y;H^y)\) over \(t\). 
We then let \(\widehat m_{64}^{\,y}\) denote the empirical lower
\(\delta\)-quantile of the \(64\) routewise minima for
\(y\in\{\eps,p\}\). Following the protocol of
Section~\ref{sec:protocol}, the reported prompt-value estimate is
$\widehat{\Val}^{\kappa}_{M,\delta,64}(p;z)
  =
  \widehat m_{64}^{\,\eps}-\widehat m_{64}^{\,p},$
the empirical counterpart of
\(\widetilde{\Val}^{\kappa}_{M,\delta}(p;z)\).

\subsection{Results and observations}\label{sec:experiment-results}

For presentation, we divide the twelve problems into two cohorts under the
generated-thought convention with \(c_{\mathrm{dec}}=32\). The first contains
the six problems whose estimated prompt value is positive at all three marked
quantiles, \(\delta\in\{0.2,0.5,0.8\}\); the second contains the remaining six.

\paragraph{Thinking can reverse the non-thinking verdict.}
Figure~\ref{fig:description-length-positive} isolates the probability component
of the measure for the cohort positive at all three marked quantiles. In five of the six cases,
supplying the correct first step initially makes the gold artifact \emph{less}
likely at \(t=0\) (and in the sixth one only slightly improves). Nevertheless, once thinking is
incorporated, the generated-thought prompt value is positive at all three
marked quantiles. The curves show the source of this reversal: the prompted
condition reaches favorable artifact probabilities after shorter thinking
prefixes. Thus the non-thinking probability-based value of
Section~\ref{sec:exact}, including the corresponding criterion of
\cite{Xie26}, can give the \emph{opposite} qualitative verdict.

\begin{figure*}[!tp]
  \centering
  \includegraphics[width=0.84\textwidth]
  {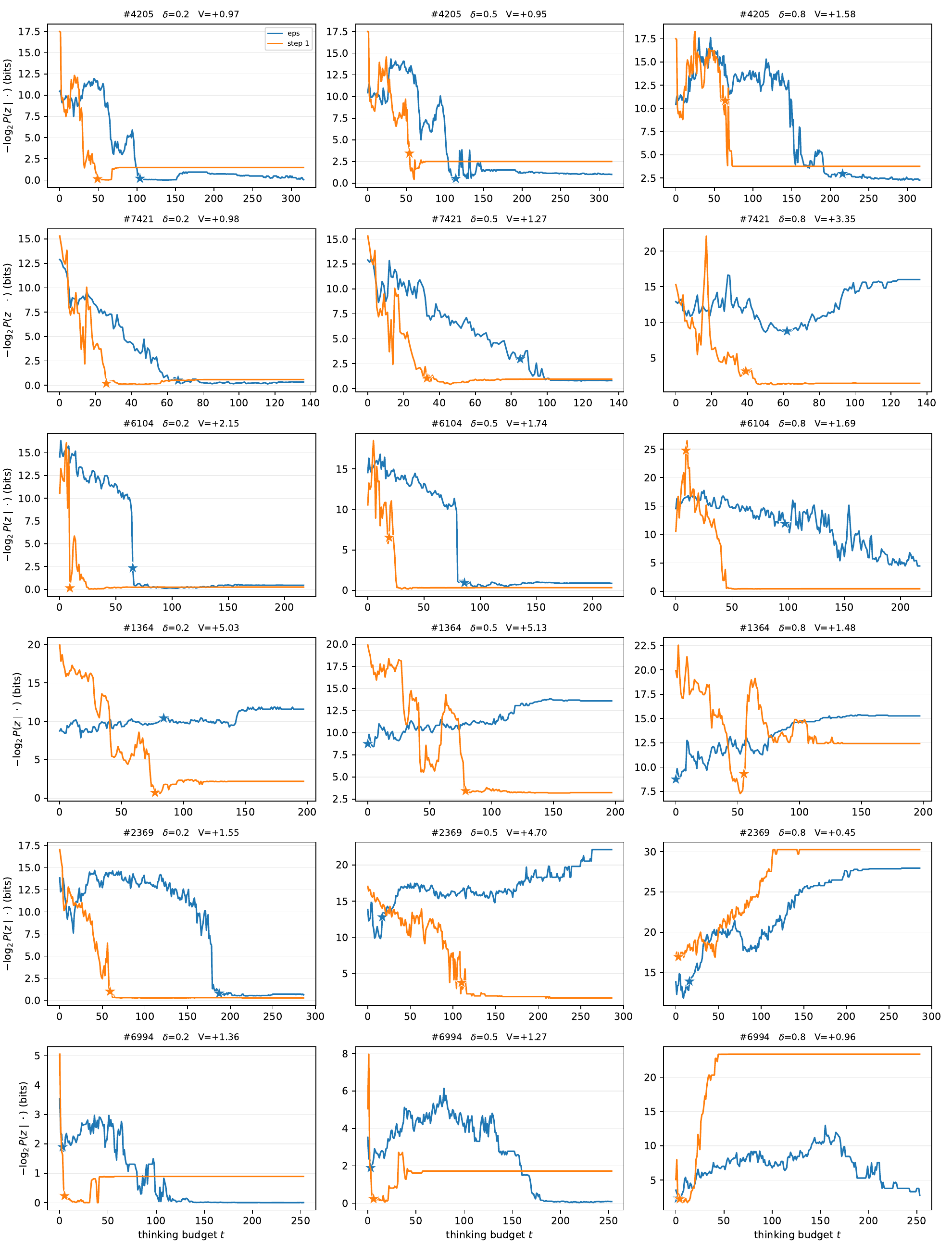}
  \caption{Probability-only artifact-description-length profiles for the six
  cases positive at all three marked quantiles. Columns correspond to
  \(\delta\in\{0.2,0.5,0.8\}\); blue denotes the unprompted condition and
  orange the prompted condition. Each curve is the pointwise empirical lower
  \(\delta\)-quantile of
  \(-\log_2 P_M(z\mid y,H^y_{\leq t},\EOT)\), without a computational charge.
  Stars mark the minimizing \(t\)'s in the generated-thought
  \(\widetilde{Kt}\) objective for the rollouts defining the displayed
  empirical quantiles.}
  \label{fig:description-length-positive}
\end{figure*}

\paragraph{Acceleration v.s. Steering:}
Figure~\ref{fig:pv-cost-comparison} evaluates the cohort positive at all three marked quantiles
under both cost conventions. For several cases, the generated-thought value is
positive while the prefix-prefill value is close to zero. This attenuation is
consistent with the prompt primarily \emph{accelerating} computation that the
unprompted model can recover by using a longer thinking prefix once that prefix
is charged only at the cheaper prefill rate.

In other cases, a substantial advantage persists under prefix-prefill
accounting. Together with the description-length curves in
Figure~\ref{fig:description-length-positive}, this persistence suggests that,
the gain is not explained solely by avoiding
sequential thinking. It is instead consistent with the prompt \emph{steering} the model toward a state that
assigns greater probability to the artifact. 

\begin{figure*}[!tp]
  \centering
  \includegraphics[width=0.94\textwidth]
  {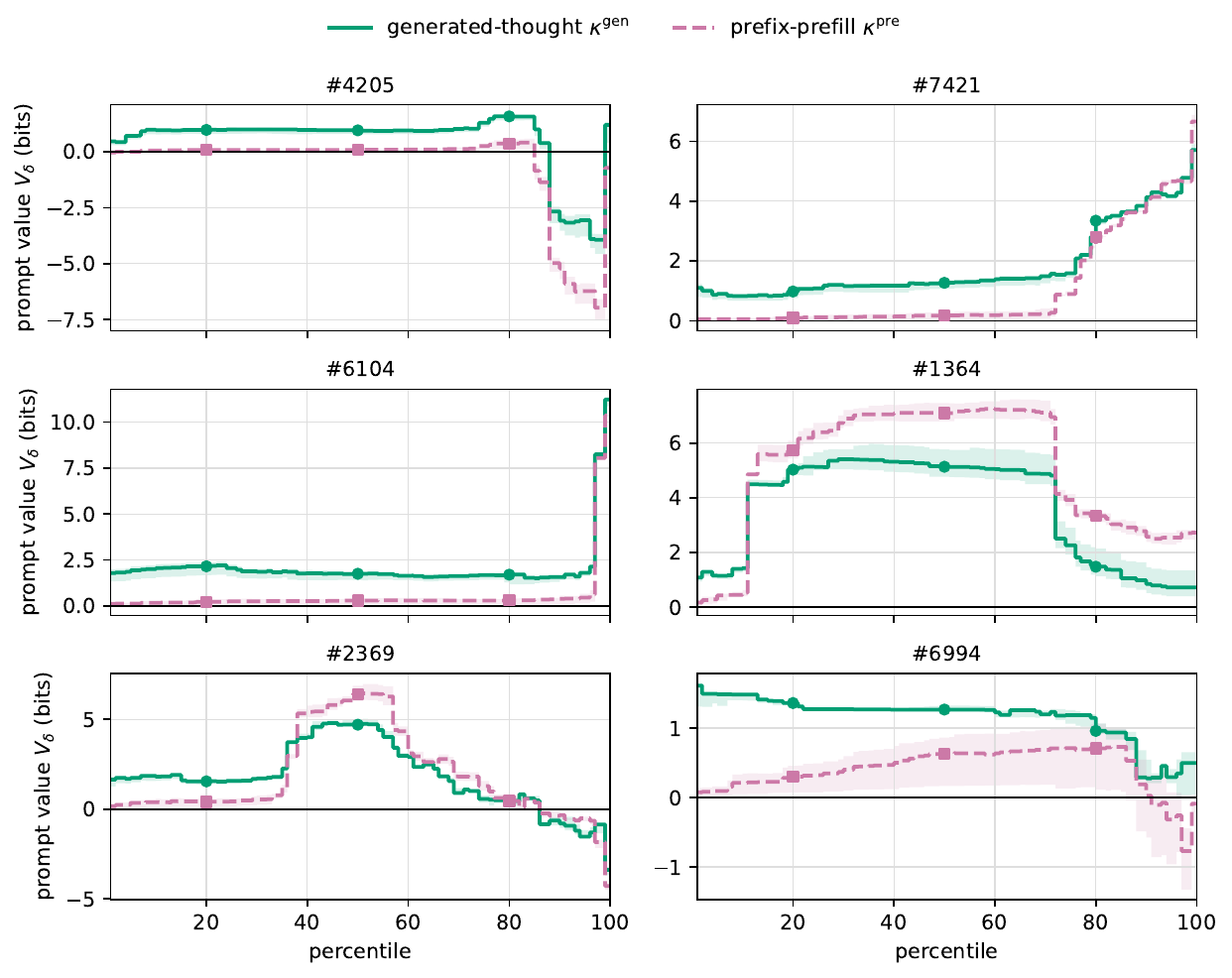}
  \caption{Prompt-value profiles for the cohort positive at all three marked quantiles under
  generated-thought accounting (solid green) and prefix-prefill accounting
  (dashed purple), with \(c_{\mathrm{dec}}=32\). Circles and squares mark
  \(\delta\in\{0.2,0.5,0.8\}\), and positive values favor the prompted
  condition. The green and purple shaded regions are the corresponding
  pointwise envelopes over the sampled
  \(c_{\mathrm{dec}}\in[8,256]\) grid; they are cost-sensitivity envelopes,
  not confidence bands. Vertical scales vary by panel.}
  \label{fig:pv-cost-comparison}
\end{figure*}

\begin{samepage}
\paragraph{A correct partial solution need not be valuable.}
Under the default generated-thought convention, the first reference step has
positive estimated value at all three marked quantiles in six of the twelve
cases.
Figure~\ref{fig:pv-generated-other} shows that, among the \emph{remaining} six,
the effects range from harmful to negligible or mixed. This is not surprising: a correct step from a human
reference solution was not designed as an optimal prompt for this model and
may be redundant or induce an unfavorable continuation. Correctness of the
supplied reasoning therefore does not by itself imply positive prompt value.

\paragraph{Prompt value is distribution-dependent.}
Several generated-thought profiles in
Figure~\ref{fig:pv-generated-other} cross zero as \(\delta\) varies, and
the crossings occur in both directions. The same prompt can therefore help one
part of the rollout distribution while hurting another, so its value at the
median need not describe its effect elsewhere in the distribution. Reporting
prompt value as a function of \(\delta\), rather than at a single quantile,
makes this heterogeneity visible.

\end{samepage}

\begin{figure*}[!tp]
  \centering
  \includegraphics[width=0.94\textwidth]
  {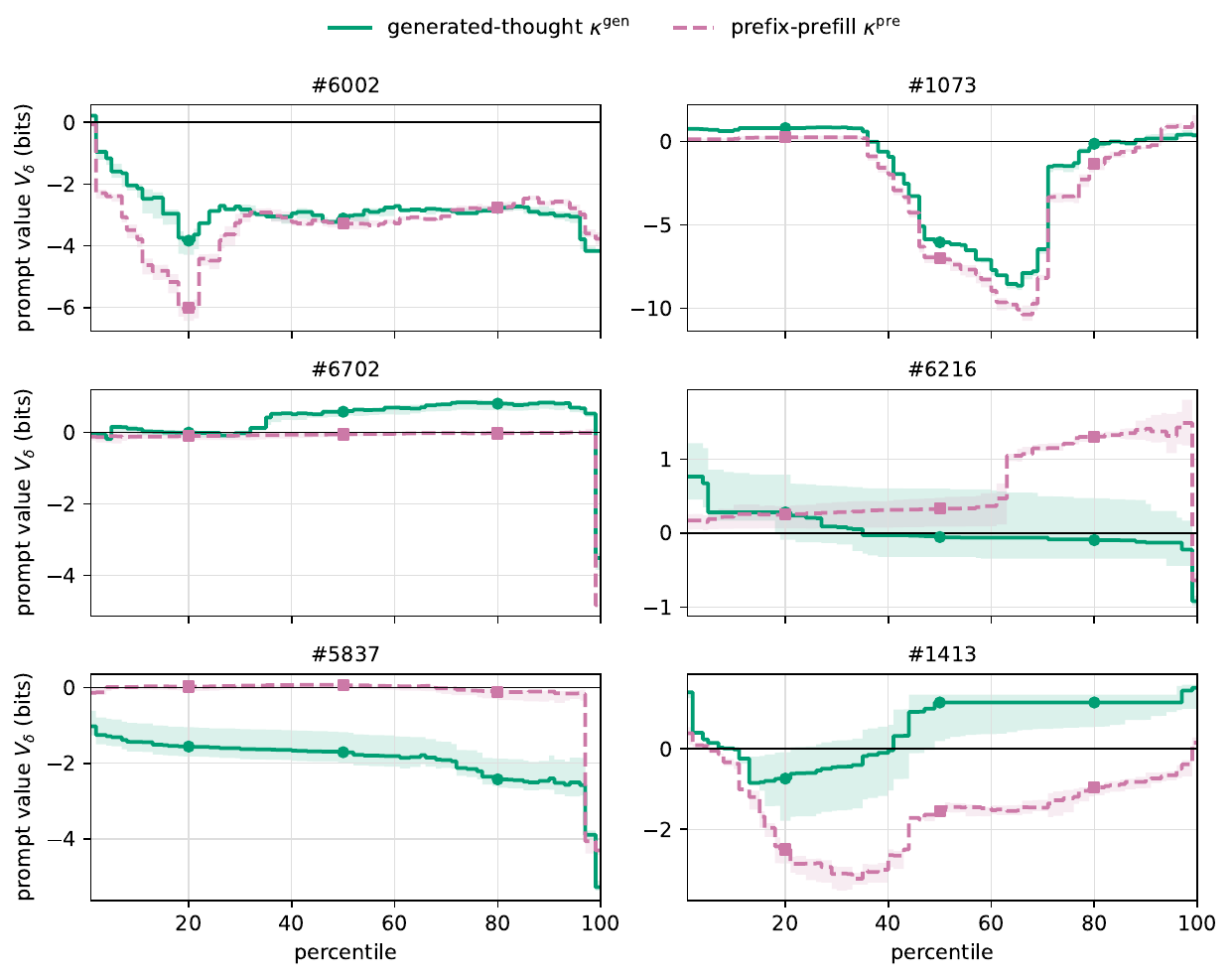}
  \caption{Prompt-value profiles for the remaining six problems under
  generated-thought accounting (solid green) and prefix-prefill accounting
  (dashed purple), with \(c_{\mathrm{dec}}=32\). Circles and squares mark
  \(\delta\in\{0.2,0.5,0.8\}\), and positive values favor the prompted
  condition. The green and purple shaded regions are the corresponding
  pointwise envelopes over the sampled
  \(c_{\mathrm{dec}}\in[8,256]\) grid; they are cost-sensitivity envelopes,
  not confidence bands. Vertical scales vary by panel.}
  \label{fig:pv-generated-other}
\end{figure*}

\clearpage

\section{Related work}\label{sec:related}

\paragraph{Algorithmic information, resource bounds, and compression.}
Our starting point is Kolmogorov's classical notion of algorithmic information
\cite{Kolmogorov65,ZL70}; see \cite{LiVitanyi} for a modern treatment. A
second classical ingredient is \emph{a-priori complexity}: For a universal
prefix machine \(U\), let \(m_U(x)\) denote the probability that \(U\) outputs
\(x\) when its input bits are sampled uniformly. Levin's coding theorem
identifies \(-\log_2m_U(x)\), up to an additive constant, with the prefix-free
Kolmogorov complexity of \(x\) \cite{ZL70,LiVitanyi}. Thus two classical views
of complexity---the length of a shortest prefix-free program and the negative
logarithm of universal generation probability---agree up to an additive
constant. 
Our program-based and a-priori LLM-relative notions instantiate these two
classical views for a fixed model, and the relationship between them established
in Theorem~\ref{thm:semantics} is the corresponding fixed-model analogue of
the coding theorem.

To incorporate computation, we draw on Levin's \(Kt\) complexity, which
combines program length with the logarithm of running time
\cite{Levin73}. 

The use of next-symbol probabilities from neural language models for lossless
compression predates modern LLMs \cite{SchmidhuberHeil96}; recent LLM-based
examples include \cite{Valmeekam23,Deletang24}.

\paragraph{Probabilistic and randomized Kolmogorov complexity.}
Our $\pKtt^{\kappa}_M$ is the model-relative analog of a line of work on
Kolmogorov complexity ``relative to a random tape". In particular, Goldberg, Kabanets, Lu and
Oliveira define the \emph{probabilistic} time-bounded Kolmogorov complexity
$pK^t_\delta(x)$ by
\[
pK^t_\delta(x)
=
\min\Bigl\{
k :
\Pr_r\!\bigl[K^t(x\mid r)\le k\bigr]\ge \delta
\Bigr\},
\]
where the probability is over the random tape $r$ \cite{GKLO22}. Thus,
$pK^t_\delta(x)$ is simply the $\delta$-quantile of the random variable
$K^t(x\mid r)$.
Equivalently, one first fixes the random tape, then computes the shortest
description relative to that tape, and finally takes a $\delta$-quantile over the
randomness.

One may regard the realized thought \(H^y\) as generated from an
underlying uniformly random tape \(r\). Under this representation, our
definition has exactly the same structure as \(pK^t_\delta\); our
specializations are that the reference machine is the LLM rather than a universal Turing machine, and that we use the a-priori companion of $Kt$ complexity in place of $K^t$. 

This notion is to be contrasted with \emph{randomized} Kolmogorov complexity
$rK^t$ \cite{LOS21}, where the program is fixed \emph{before} the randomness
and must succeed with high probability over it; perhaps surprisingly, this notion is less amenable for our purposes.

\paragraph{PMI and prompt scoring.}
As mentioned, the likelihood-ratio expression $\log_2 P_M(z \mid p)/P_M(z)$
obtained in our non-thinking prompt value has the exact
algebraic form of pointwise mutual information (PMI); PMI originates in classical information theory with Fano~\cite{Fano61} and
was popularized in computational linguistics by Church and Hanks as a measure
of association between particular word pairs~\cite{CH90}.

As discussed, Xie et al.\ study the closely related problem of measuring
human contribution in AI-assisted content generation \cite{Xie26}. For a
human input $x$ and LLM output string $y$, they define ``self-information" and
``conditional self-information" by $I(y) = -\log p_\theta(y)$ and
$I(y \mid x) = -\log p_\theta(y \mid x)$, and use the normalized ``contribution
score" $\phi = \bigl( I(y) - I(y \mid x) \bigr) / I(y)$. The unnormalized
numerator $I(y) - I(y \mid x) = \log \bigl( p_\theta(y \mid x)/p_\theta(y)
\bigr)$ is exactly PMI and thus coincides with the no-thinking
case of our framework. 

Sorensen et al.\ use Shannon mutual information to select prompt templates,
maximizing mutual information between task inputs and model outputs over an
unlabeled evaluation distribution \cite{Sorensen22}. Their objective ranks
templates by averaging across task instances, whereas ours measures the value
of a particular prompt for producing a particular artifact.

\paragraph{Relationship with watermarking schemes.}
The view that sampling
randomness can be coupled to generated text has precedents in language-model
watermarking, where distribution-preserving and cryptographic schemes map
random keys or streams through autoregressive samplers
\cite{Kuditipudi23,CGZ24}. Our dyadic-program construction is not a
watermarking method, but it uses the same basic fact that an autoregressive
model together with a random real determines a generated string.

\paragraph{Value of information, costly computation, and AI economics.}
Classical value-of-information theory measures the improvement in expected
utility obtained from observing a signal~\cite{Howard66}, while work on bounded
rationality treats computation itself as costly~\cite{RW91}. Our approach
follows most directly the \emph{value of computational information} perspective
of Halpern and Pass~\cite{HP11,HP14}. In their framework, an explicit utility
function evaluates both the action (i.e., the outcome) produced and the computation
used to produce it; information may therefore be valuable not only because it
improves the outcome, but also because it saves computation. Our measure
specializes this perspective to a fixed LLM $M$ and fixed artifact $z$: rather than
assigning an external utility to \(z\), it measures how much the prompt reduces
the model-relative cost of producing \(z\).

Our comparison between what can be produced with and without the prompt also follows
the simulation paradigm underlying zero-knowledge proofs~\cite{GMR}. Halpern
and Pass relate this perspective to their framework for valuing computational
information and conversation, notably through the notion of \emph{precise zero
knowledge} introduced by Micali and Pass~\cite{MP06,HP11}. Their framework gives
a general, utility-dependent characterization of when information is valuable,
but its evaluation involves optimizing expected utility over possible
algorithms and it does not provide a general efficient estimator. Our
fixed-model notion is narrower, but this narrowing is what enables efficiently estimability.

More broadly, our work is motivated by the question of where human value lies
as AI makes ``creation" cheap. Catalini, Hui, and Wu provide a
complementary answer, emphasizing human verification---the scarce capacity to
check, audit, and assume responsibility for AI-generated outputs~\cite{CHW26}.
We focus on another form of human contribution: providing an input that helps
the model produce an artifact, and ask how the value of that input can be
measured.

\section{Conclusions and Future Work}\label{sec:conclusion}

We defined an LLM-relative prompt value that credits both changes in artifact
likelihood and reductions in required thinking, showed how to estimate its
quantiles from sampled routes, and gave it an operational reproduction-cost
interpretation. The GSM8K illustration shows that a non-thinking likelihood
comparison can reverse once thinking is admitted, and that the resulting value
depends both on the rollout quantile and on how thinking is costed.

\paragraph{The value of conversation.}
Our framework values a single prompt supplied in a fixed context. For a
realized multi-turn conversation, one can apply the measure sequentially,
valuing each human input conditional on the transcript at which it arrives and
summing the resulting increments. This provides an ex post accounting of the
realized contributions. It does not, however, account for the computation used
to formulate those inputs or for the adaptivity of the human's strategy, and
therefore need not capture the value of access to the underlying conversational
policy.

Halpern and Pass address this more general question by specifying an interactive
Turing machine that conducts the conversation~\cite{HP11}. That approach is
less suitable in our setting, where the evaluator generally observes the
human's messages but has no description of the adaptive policy that produced
them. Defining and measuring the value of adaptive human contributions under
such limited access is an intriguing open question.

\paragraph{The value of an artifact.}
Our framework measures the value of a prompt for producing a declared artifact.
A related question is whether it can help evaluate the artifact itself. The
unprompted quantity
\[
\pKtt^\kappa_{M,\delta}(z\mid\eps)
\]
measures how difficult \(z\) is for the reference LLM to produce, but this
cannot by itself be interpreted as the artifact's value: a random string may
have high production difficulty while having no substantive value.

One possible direction is \emph{semantic re-randomization}: apply a declared,
prompt-independent procedure---for example, instruct another LLM to rewrite
\(z\) while preserving its meaning---and compute the unprompted complexity of
the resulting rewrite. One might aggregate this quantity over several
independent rewrites, with the hope that semantically inert details disappear
while the difficulty of the underlying content remains. We leave an exploration of this for future work.

\section{Acknowledgments}
I am very grateful to Noam Mazor for helpful discussions. As mentioned above,
I am also very grateful to ChatGPT and Claude for extensive discussions,
drafting, editing, reviewing and, notably, implementing all the experiments.
\bibliographystyle{alpha}
\bibliography{references-3}

\appendix

\section{Proof of Theorem~\ref{thm:semantics}}
\label{app:dyadic}

We start with two simple facts about dyadic subintervals.

\begin{lemma}[Dyadic subinterval lemma]\label{lem:subinterval}
Every half-open interval $J=[a,b)\subseteq[0,1)$ of positive length
$\ell=b-a$ contains a dyadic interval $D_\pi$ of length greater than
$\ell/4$.
\end{lemma}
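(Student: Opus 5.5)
The plan is to choose the dyadic scale from the length $\ell$ and then show that a grid of that scale must place at least one whole cell inside $J$. Let $m$ be the unique integer with $2^{-m}\le \ell/2<2^{-m+1}$, i.e.\ $m:=\lceil \log_2(2/\ell)\rceil$. Since $\ell\le 1$, we have $\ell/2\le 1/2$, so $m\ge 1$ and the grid of dyadic intervals of length $2^{-m}$ is well defined inside $[0,1)$. The interval we aim for has length $2^{-m}>\ell/4$, because $2^{-m+1}>\ell/2$.

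To locate it, I use the same choice as in the footnote computing $K_M$. Set $k:=\lceil 2^m a\rceil$ and $D:=[k2^{-m},(k+1)2^{-m})$. By construction $k2^{-m}\ge a$ and $k2^{-m}<a+2^{-m}$. Hence the right endpoint satisfies
\[
(k+1)2^{-m}<a+2\cdot 2^{-m}\le a+\ell=b,
\]
where the last inequality is $2^{-m}\le\ell/2$. Therefore $D\subseteq[a,b)=J$.

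It remains to check that $D$ really is of the form $D_\pi$ with $\pi\in\{0,1\}^m$, which requires $0\le k\le 2^m-1$. Nonnegativity follows from $a\ge 0$. For the upper bound, $(k+1)2^{-m}<b\le 1$ gives $k+1<2^m$, so $k\le 2^m-1$. Taking $\pi$ to be the $m$-bit binary representation of $k$ gives $N(\pi)=k$, and so $D_\pi=D$.

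The only delicate point is choosing $m$ so that two grid cells fit in $\ell$, which makes the strict inequality $2^{-m}>\ell/4$ and containment hold simultaneously. Everything else is routine. The lemma is then used exactly as the proof sketch of Theorem~\ref{thm:semantics} indicates: it gives $K_M\le -\log_2\ell+2$ with strict inequality, since $|\pi|=m<\log_2(4/\ell)$.
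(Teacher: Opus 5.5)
Your proof is correct and follows essentially the same argument as the paper. Your $m$ is the paper's $j+1$: the scale $s=2^{-m}$ with $2^{-m}\le\ell/2<2^{-m+1}$. You then use the same choice $k=\lceil 2^m a\rceil$, the same bound $(k+1)2^{-m}<a+2\cdot 2^{-m}\le b$ for containment, and the same range check from $(k+1)2^{-m}<b\le 1$ to obtain an $m$-bit program $\pi$ with $D_\pi=[k2^{-m},(k+1)2^{-m})$.
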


\begin{proof}
Let $j\ge 0$ be the largest integer such that $
2^{-j}>\frac{\ell}{2}$
and set $s:=2^{-(j+1)}$. By maximality of $j$,
$s\le \frac{\ell}{2}$
while $2s=2^{-j}>\ell/2$, so $
s>\frac{\ell}{4}$.

Now let $k:=\lceil a/s\rceil$. Then
\[
ks\ge a
\qquad\text{and}\qquad
ks<a+s.
\]
Hence
\[
(k+1)s<a+2s\le a+\ell=b.
\]
Therefore
\[
[ks,(k+1)s)\subseteq[a,b).
\]
Moreover, $(k+1)s<b\le1$, so $0\le k<2^{j+1}$; hence $k$ has a
$(j+1)$-bit representation $\pi$ and $[ks,(k+1)s)=D_\pi$. This is a dyadic
interval of length $s>\ell/4$.
\end{proof}

\begin{lemma}[Dyadic interval bound]\label{thm:dyadic}
Let $J = [a,b) \subseteq [0,1)$ be a half-open interval of positive length
$|J| = b-a$. Define $K(J) := \min\{ |\pi| : D_\pi \subseteq J \}$. Then
\begin{equation}\label{eq:dyadicbound}
-\log_2 |J| \;\le\; K(J) \;<\; -\log_2 |J| + 2.
\end{equation}
\end{lemma}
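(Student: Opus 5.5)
The plan is to prove the two inequalities in \eqref{eq:dyadicbound} separately. The lower bound comes from comparing lengths. The upper bound follows directly from Lemma~\ref{lem:subinterval}.

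\emph{Lower bound.} Let \(\pi\) be any binary program with \(D_\pi\subseteq J\). Since both are intervals and one contains the other, \(2^{-|\pi|}=|D_\pi|\le |J|\), so \(|\pi|\ge -\log_2|J|\). Minimizing over all such \(\pi\) gives \(K(J)\ge-\log_2|J|\).

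\emph{Upper bound.} Apply Lemma~\ref{lem:subinterval} to \(J=[a,b)\) with \(\ell=|J|>0\). It yields a dyadic interval \(D_\pi\subseteq J\) with \(2^{-|\pi|}=|D_\pi|>\ell/4\). Taking logarithms gives \(|\pi|<-\log_2\ell+2\). Since \(K(J)\le|\pi|\), the strict upper bound follows. The lemma also shows that the set over which the minimum is taken is nonempty, so \(K(J)\) is finite and the minimum is attained; this minimum exists because program lengths are natural numbers.

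\emph{Where care is needed.} The only delicate point is that the upper bound is strict. This strictness comes directly from the strict inequality \(s>\ell/4\) in Lemma~\ref{lem:subinterval}, which was obtained from the maximality of \(j\). So the expected main obstacle has already been handled by that lemma, and the remaining argument only takes logarithms, which preserves strict inequalities.

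Theorem~\ref{thm:semantics}(i) then follows by applying this bound to \(J=I_y(z)\), whose length is \(P_M(z\mid y)>0\).
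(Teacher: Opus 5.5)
Your proof is correct and follows the paper's own argument essentially verbatim: the lower bound from $|D_\pi|\le|J|$, and the strict upper bound by applying Lemma~\ref{lem:subinterval} to obtain $D_\pi\subseteq J$ with $2^{-|\pi|}>|J|/4$ and then taking logarithms. Your added remark that the lemma also guarantees the minimum defining $K(J)$ is over a nonempty set (so $K(J)$ is finite) is a harmless clarification that the paper leaves implicit.
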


\begin{proof}
For the lower bound, suppose $D_\pi \subseteq J$. Then
$2^{-|\pi|} = |D_\pi| \le |J|$; taking $-\log_2$ gives $|\pi| \ge -\log_2 |J|$.
Since this holds for every valid $\pi$, it holds for $K(J)$.

For the upper bound, Lemma~\ref{lem:subinterval} gives a dyadic interval
$D_\pi \subseteq J$ with $|D_\pi| > |J|/4$, hence
$2^{-|\pi|} > |J| \cdot 2^{-2}$; taking $-\log_2$ gives
$|\pi| < -\log_2 |J| + 2$, and therefore $K(J) \le |\pi| < -\log_2 |J| + 2$.
\end{proof}

\paragraph{Returning to the Proof of Theorem~\ref{thm:semantics}}
\begin{proof}[Proof of Theorem~\ref{thm:semantics}]
For part (i), fix a context $y$ and write $q := P_M(z \mid y) > 0$. By
\eqref{eq:interval}, the interval $J := I_y(z)$ has length $q$, and by
Definition~\ref{def:KM}, $K_M(z \mid y) = K(J)$. Lemma~\ref{thm:dyadic} gives
\[
-\log_2 q \;\le\; K_M(z \mid y) \;<\; -\log_2 q + 2,
\]
and adding $\log_2 q$ throughout yields
$0 \le K_M(z \mid y) + \log_2 q < 2$, which is \eqref{eq:codinggap}. 

For part (ii), by part (i),  there is, for each context $y$, a number
$\gamma_y(z) \in [0,2)$ such that
$K_M(z \mid y) = -\log_2 P_M(z \mid y) + \gamma_y(z)$. Therefore
\[
\Val_M(p;z) = \widetilde{\Val}_M(p;z) + \gamma_\eps(z) - \gamma_p(z),
\]
and the difference of two numbers in $[0,2)$ lies in $(-2,2)$, proving
\eqref{eq:threebits}.
\end{proof}

\end{document}